\setlist{topsep=0pt, leftmargin=*}
\theoremstyle{plain}
\newtheorem{theorem}{Theorem}[section]
\newtheorem{proposition}[theorem]{Proposition}
\newtheorem{lemma}[theorem]{Lemma}
\theoremstyle{definition}
\newtheorem{definition}[theorem]{Definition}
\theoremstyle{remark}
\newtheorem{remark}[theorem]{Remark}
\begin{document}

\twocolumn[
\icmltitle{HoP: Homeomorphic Polar Learning for Hard Constrained Optimization}



\icmlsetsymbol{equal}{*}
\begin{icmlauthorlist}
\icmlauthor{Ke Deng}{equal,yyy}
\icmlauthor{Hanwen Zhang}{equal,yyy}
\icmlauthor{Jin Lu}{xx}
\icmlauthor{Haijian Sun}{yyy}
\end{icmlauthorlist}

\icmlaffiliation{yyy}{School of Electrical \& Computer Engineering University of Georgia, Athens, USA}
\icmlaffiliation{xx}{School of Computing, University of Georgia, Athens, USA}
\icmlcorrespondingauthor{Ke Deng}{ke.deng@uga.edu}
\icmlcorrespondingauthor{Hanwen Zhang}{hanwen.zhang@uga.edu}
\icmlcorrespondingauthor{Jin Lu}{jin.lu@uga.edu}
\icmlcorrespondingauthor{Haijian Sun}{hsun@uga.edu}

\vskip 0.3in
]
\printAffiliationsAndNotice{\icmlEqualContribution} 


\begin{abstract}
Constrained optimization demands highly efficient solvers which promotes the development of learn-to-optimize (L2O) approaches. As a data-driven method, L2O leverages neural networks to efficiently produce approximate solutions. However, a significant challenge remains in ensuring both optimality and feasibility of neural networks' output. To tackle this issue, we introduce Homeomorphic Polar Learning (HoP) to solve the star-convex hard-constrained optimization by embedding homeomorphic mapping in neural networks. The bijective structure enables end-to-end training without extra penalty or correction. For performance evaluation, we evaluate HoP's performance across a variety of synthetic optimization tasks and real-world applications in wireless communications. In all cases, HoP achieves solutions closer to the optimum than existing L2O methods while strictly maintaining feasibility.

\end{abstract}
\section{Introduction}
\label{Introduction}
\begin{figure}[ht]
\vspace{-1em}
\begin{center}
\centerline{\includegraphics[width=0.38\textwidth] {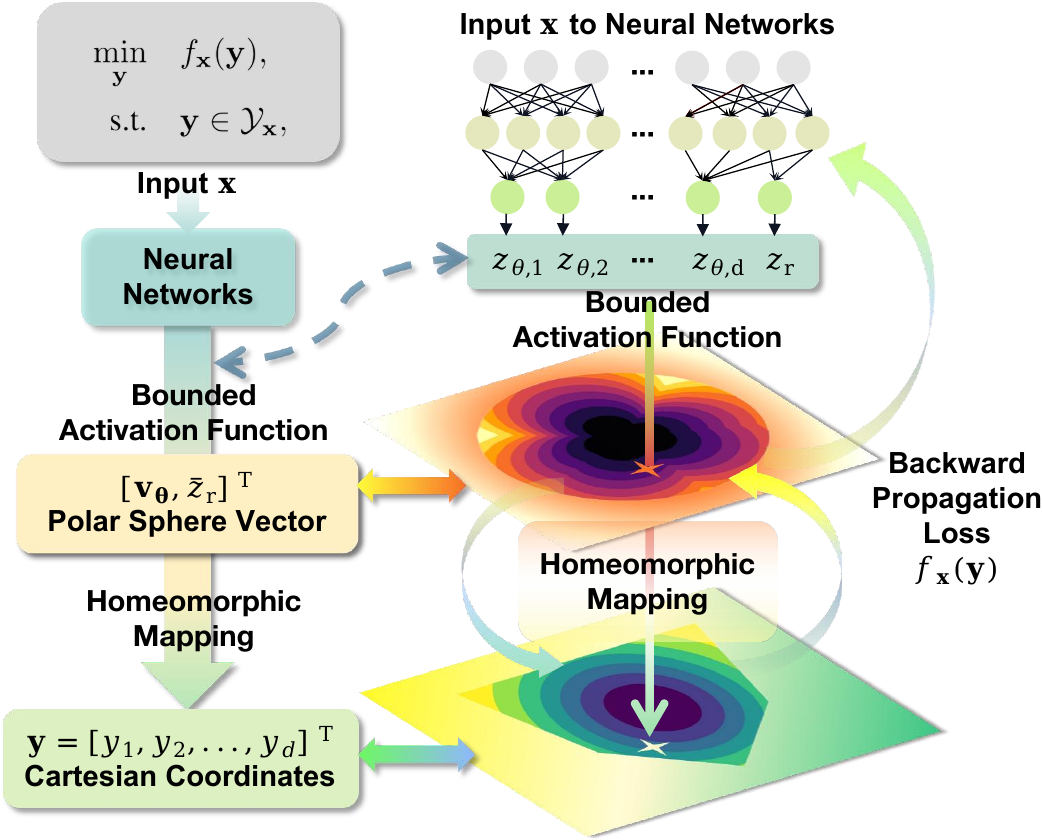}} 
\caption{HoP is structured as follows: the gray box defines the problem considered in this paper, where the problem variable is expressed as $\mathbf{y}$, where $\mathbf{x}$ denotes problem parameters, and the objective function, respectively. The parameters $\mathbf{x}$ are fed into a NN, which contains a bounded activation function and produces a polar sphere vector comprising the direction vector $\mathbf{v}_{{\theta}}$ and the length scale $\bar{z}_r$. Using a homeomorphic mapping, the polar sphere vector is then transformed into Cartesian coordinates while strictly adhering to the original constraints. The warm-colored space on the right side represents the polar space corresponding to the polar sphere vector, while the green space is the Euclidean space for Cartesian coordinates. The loss function $f_\mathbf{x}(\mathbf{y})$ which can be trained end-to-end without requiring additional penalties or corrections.}
\label{polar_projection_struc}
\end{center}
\vspace{-3em}
\end{figure}
Optimization takes a fundamental role in numerous scientific and engineering applications \cite{liu2024survey,abido2002optimal,rockafellar2013fundamental}. However, these algorithms often require significant computation time to solve complex problems, particularly when dealing with non-convex or high-dimensional cases. To address these limitations, the paradigm of learn-to-optimize (L2O) has emerged as a promising alternative \cite{hounie2024resilient,ding2024resilient}. As a learning-based scheme, L2O takes optimization parameters as the inputs of neural network (NN) which can efficiently obtain approximate solutions from the outputs. This data-driven approach enhances the speed and reduces the computational cost of achieving satisfactory solutions \cite{donti2021dc3,park2023self}.


Despite its advantages, L2O faces significant challenges when applied to constrained optimization. A major issue is the lack of guarantees for NN to ensure that solutions strictly remain within the feasible region defined by the constraints \cite{donti2021dc3}. Current works have attempted to address this limitation through various approaches, including supervised learning \cite{zamzam2020learning}, incorporating constrained violations into the loss function \cite{zhang2024constrained, xu2018semantic}, post-correction methods \cite{donti2021dc3}, implicit differentiation \cite{amos2017optnet}, and other techniques \cite{zhong2023neural,misra2022learning}. However, these methods often exhibit limitations in optimal solution searching and hard constraints violation control.

We propose the homeomorphic polar learning (HoP) to address the non-convex problem with star-convex hard constraints. As stated in Definition \ref{star_convex}, star-convexity is a weaker condition than convexity, which introduces additional challenges in handling hard constraints. HoP is a learning based framework inspired by principles of convexity and topological structure of constraints, designed to ensure both the feasibility and optimality of solutions. As illustrated in Fig.~\ref{polar_projection_struc}, the problem parameters are fed into NN, where the raw outputs are regulated by bounded functions. Subsequently, through the proposed homeomorphic mapping, the polar sphere vectors are mapped to Cartesian coordinates, strictly following to the original constraints. Furthermore, HoP is trained end-to-end with objective function as loss directly, which makes it adaptable to various applications. The key contributions are
\begin{itemize}
    \item \textbf{Novel formulation of hard-constrained optimization via polar coordinates and homeomorphic mapping}. HoP is the first L2O framework based on polar coordinates and homeomorphism to solve hard constrained problem. Our formulation extends the applicability of L2O methods to star-convex constraints, a more complex and less explored constraint type.
    \item \textbf{Reconnection strategies and dynamic learning rate adjustment and  for polar optimization}. To address challenges specific to polar coordinate optimization, such as radial stagnation and angular freezing, we propose a dynamic learning rate adjustment scheme and geometric reconnection strategies. 
    We provide rigorous theoretical analyses to validate the stability and efficiency of these solutions.
     \item \textbf{Superior experimental performance with zero violation.} 
    Through extensive ablation and comparative experiments, we validate the feasibility and optimality of our approach across a wide range of problems. Results consistently show that HoP outperforms both traditional and learning-based solvers in terms of constraint satisfaction and optimization efficiency.
\end{itemize}

\section{Related Work}
We present related works for constrained optimization problems using L2O. Broadly, researches in this area can be categorized into two distinct directions: soft and hard constrained L2O.

\subsection{Soft Constrained Optimization with L2O}
Soft constrained L2O emphasizes on enhancing computational efficiency on NN inference speed while tolerating a limited rate of constraint violations. Early research in this domain explored the use of supervised learning (SL) to directly solve optimization problems, where the optimal variable $\mathbf{y}^*$ is provided as labels by optimizer \cite{zamzam2020learning,guha2019machine}.  Another prominent direction involves incorporating constraint violations into the objective function by Karush-Kuhn-Tucker conditions \cite{donti2021dc3,zhang2024constrained, xu2018semantic}. In this methods, constraints are reformulated as penalty terms and integrated into the objective function as the loss for self-supervised learning (SSL). Subsequent advancements introduced alternative optimization based learning, where variables and multipliers are alternately optimized through dual NNs \cite{park2023self,kim2023self,nandwani2019primal}. More recent related researches include preventive learning, which incorporates pre-processing in learning to avoid violation \cite{zhao2023ensuring}. Additionally, resilience-based constraint relaxation methods dynamically adjust constraints throughout the learning process to balance feasibility and overall performance \cite{hounie2024resilient,ding2024resilient}.

\subsection{Hard Constrained Optimization with L2O}
Hard constrained optimization in L2O prioritizes strict adherence to constraints, even if it results in reduced optimality or slower computation speed. Traditional optimization methods often employ proximal optimization techniques to guarantee feasibility \cite{cristian2023end,min2024hard}. Early methods also used activation functions to enforce basic hard constraints \cite{sun2018learning}. Implicit differentiation became a popular approach for effectively handling equality constraints \cite{amos2017optnet,donti2021dc3,huang2021deepopf}. However, inequality constraints typically require additional correction steps, which can lead to suboptimal solutions \cite{donti2021dc3}. An alternative strategy proposed by \cite{li2023learning} utilized the geometric properties of linear constraints to ensure outputs within the feasible region, although this method is limited to linear constraints. Other studies such as \cite{misra2022learning,guha2019machine} focused on eliminating redundant constraints to improve inference speed instead of solving optimization problem by NNs directly. In certain physical applications, discrete mesh-based approaches restrict feasible solutions to predefined points on a mesh \cite{amos2017input,zhong2023neural,negiar2022learning}. While these methods strictly enforce feasibility, they often lack  flexibility in general scenarios.

\section{Methodology}
This section introduces the problem formulation, the proposed HoP framework, and associated theoretical analyses.
\subsection{Problem Formulation}

We consider the following optimization problem:
\begin{flalign}\label{problem_formulaion}
    \min_\mathbf{y} \quad f_\mathbf{x}(\mathbf{y}),
    \quad \text{s.t.} \quad  \mathbf{y} \in \mathcal{Y}_\mathbf{x}, 
\end{flalign}
where $f_\mathbf{x}: \mathbb{R}^n \to \mathbb{R}$ is the objective function, and $\mathcal{Y}_\mathbf{x}$ is defined as star-convex constraint set, both parameterized by $\mathbf{x}$. Star-convexity is a weaker condition than convexity, which exists in $\ell_p$-norm problem and compressed sensing applications \cite{yang2022towards,donoho2011compressed}. Star-convexity is defined as below:
\begin{definition} \label{star_convex}
Let $\mathcal{Y} \subset \mathbb{R}^n$ be a non-empty set. We define $\mathcal{Y}$ as star-convex if $\exists \mathbf{y}_0 \in \mathcal{Y}$ such that $\forall \mathbf{y} \in \mathcal{Y}$, the following holds:
\begin{flalign}
    \bigl\{\, \mathbf{y}_0 + t(\mathbf{y} - \mathbf{y}_0) \mid 0 \leq t \leq 1 \,\bigr\} \subseteq \mathcal{Y}.
\end{flalign}
\end{definition}


If this condition holds for $\forall\mathbf{y}_0 \in \mathcal{Y}$, $\mathcal{Y}$ becomes convex. Thus, star-convex sets generalize convex sets, accommodating constraint sets that are not globally convex while maintaining a degree of geometric regularity relative to specific points. In Section \ref{sec_hop}, we design HoP by the convexity in star-convex set.

\subsection{Homeomorphic Polar Learning}\label{sec_hop}

To demonstrate the proposed method, we first introduce the essential idea of HoP: homeomorphic mappings. These mappings leverage the mathematical properties of homeomorphisms, formally defined as follows:
\begin{definition}\label{Homeomorphism_def}
Let $ X = (S_X, \mathcal{T}_X) $ and $ Y = (S_Y, \mathcal{T}_Y) $ be two topological spaces, where: (1) $ S_X $ and $ S_Y $ are point sets;  (2) $ \mathcal{T}_X $ and $ \mathcal{T}_Y $ are topologies on $ S_X $ and $ S_Y $, respectively. Then function $\mathcal{H} \colon X \to Y$ is called a {homeomorphism} if and only if $ \mathcal{H} $ is a bijection and continuous with respect to the topologies $ \mathcal{T}_X $ and $ \mathcal{T}_Y $, while its inverse function $\mathcal{H}^{-1} \colon Y \to X $ exists and is also continuous.
\end{definition}

\paragraph{The 1-D Case}

To simplify the mechanism of homeomorphic mappings, we begin with a one-dimensional optimization problem with constraints $a < y < b$, where the feasible region $\mathcal{Y}_{\mathbf{x}}$ is a bounded interval $(a, b)$. The corresponding homeomorphic mapping $\mathcal{H}$ is defined as:
\begin{flalign}\label{1d_case_eq}
\mathcal{H}\colon \quad \hat{y} = a + \mathcal{B}(z) (b-a),
\end{flalign}
where $z$ is output from NN, $\mathcal{B}(z)$ is a bounded, smooth and monotonic function (e.g., Sigmoid function), mapping $z$ from $\mathbb{R}$ to $(0, 1)$. To guarantee the feasible outputs, we scale the output from $(0,1)$ to $(a,b)$ by Eq. (\ref{1d_case_eq}) which is considered as simple one-to-one homeomorphic mapping defined in Definition \ref{Homeomorphism_def}.

\begin{figure}[ht]
\begin{center}
\centerline{\includegraphics[width=0.8\columnwidth]{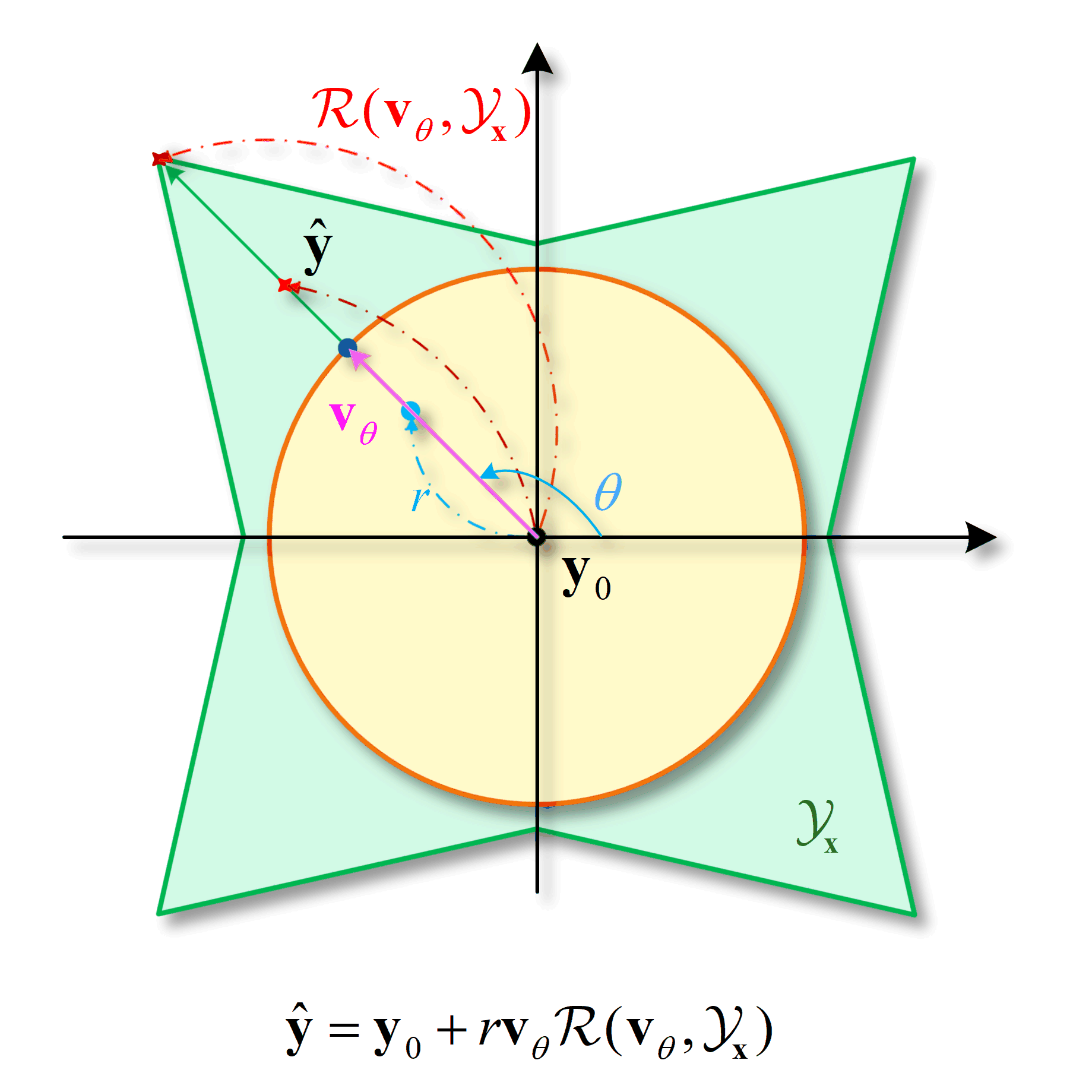}}
\vskip -0.2in
\caption{Illustration of the 2-D HoP principle: The larger green area represents the feasible region $\mathcal{Y}_\mathbf{x}$, while $\mathbf{y}_0\in\mathcal{Y}_\mathbf{x}$. NN in the HoP framework outputs the blue dot as an initial solution within unit circle, the yellow region, centered on $\mathbf{y}_0$ and constrained by a bounded activation function. The blue dot is then scaled along the direction specified by unit vector $\mathbf{v}_{\theta}$ to the red dot $\mathbf{\hat{y}}$. The scaling factor is defined as the ratio given in ${r\mathcal{R}(\mathbf{v}_{{\theta}},\mathcal{Y}_\mathbf{x})}$.}

\label{polar-mapping-2d}
\end{center}
\vspace{-3em}
\end{figure}
The one-dimensional case provides the essence of HoP. In the following higher-dimensional constraint sets, the extended homeomorphism $\mathcal{H}$ is introduced to meet the requirement. The primary reason for introducing homeomorphic mappings is to guarantee that every feasible variable can be bijectively mapped to the NN output. This one-to-one correspondence is critical for maintaining feasibility and ensuring that after homeomorphic mapping the NN outputs consistently satisfy the constraints.

\paragraph{Extension to the 2-D Case}
Building on the insight from the 1-D case, we extend the idea of bounded function based mappings to the 2-D case while preserving the properties of homeomorphic mappings. 

As shown in Fig.~\ref{polar-mapping-2d}, the feasible region $\mathcal{Y}_{\mathbf{x}}$ is the green region where $\mathbf{y}_0 \in \mathcal{Y}_{\mathbf{x}}$. We construct a unit circle, the yellow region, by polar coordinate system, with its origin centered at $\mathbf{y}_0$. Generally, $\mathbf{y}_0$ is obtained by solving a convex optimization problem over $\mathcal{Y}_{\mathbf{x}}$, where the objective is either a generic convex function or the Chebyshev center. Therefore, for $\forall \mathbf{\hat{y}} \in \mathcal{Y}_\mathbf{x}$, we have homeomorphic mapping $\mathcal{H}$ given as,
\begin{flalign}\label{2D_hop}
    \mathcal{H}\colon\quad&\mathbf{\hat{y}}=\mathbf{y}_0 + r\mathbf{v}_{{\theta}}\mathcal{R}(\mathbf{v}_{{\theta}},\mathcal{Y}_\mathbf{x})
\end{flalign}
where $r\in(0,1)$ is a scale, $\theta \in(0,2\pi)$ is the angle between x-axis and direction vector, $\mathbf{v}_{{\theta}} $ denotes the unit direction vector, and $\mathcal{R}(\mathbf{v}_{{\theta}},\mathcal{Y}_\mathbf{x})$ defines the distance from $\mathbf{y}_0$ to the boundary of $\mathcal{Y}_\mathbf{x}$ in the direction specified by $\theta$. The $\theta$, $r$, and $\mathbf{v}_{{\theta}} $ are defined as follows:
\begin{flalign}
    \begin{bmatrix}
    \theta\\
    r
    \end{bmatrix} =\begin{bmatrix}
    2\pi&0\\
    0&1
    \end{bmatrix}\begin{bmatrix}
   \mathcal{B}(z_\theta)\\
    \mathcal{B}(z_r)
    \end{bmatrix}, \quad
    \mathbf{v}_{{\theta}} = \begin{bmatrix}
    \cos{\theta}\\
    \sin{\theta}
    \end{bmatrix},
\end{flalign}
where $\mathbf{z} = [z_\theta, z_r]^T$ is raw NN output. It is worth noting that in optimization problems, redundant constraints, which do not affect the feasible region because they are implied by other constraints, can arise and significantly complicate the process of identifying boundary points. To address these challenges, it is critical to ensure that the homeomorphic mapping identifies the boundary points of the feasible region $\mathcal{Y}_\mathbf{x}$ accurately in the presence of redundant constraints. Our proposed method leverages the polar coordinate system to handle this issue effectively. When redundant constraints exist, $\mathcal{R}(\mathbf{v}_{{\theta}},\mathcal{Y}_\mathbf{x})$ finds boundary points by searching the closest intersection in the direction specified by $\theta$, as formalized in Proposition \ref{proposition_redundant}:
\begin{proposition} \label{proposition_redundant}
Let $ C_1, C_2, \dots, C_n $ be sets in the Euclidean space $\mathbb{R}^n$, and let their intersection $ C = \bigcap_{i=1}^N C_i $ be star-convex set. If $ \mathbf{y}_0 \in \operatorname{int}(C) $. For any ray originating from $ \mathbf{y}_0 $, the closest intersection point of the ray with $ C $ belongs to the set $C$ and lies on the boundary of $C$.
\end{proposition}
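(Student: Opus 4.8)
The plan is to reduce the statement to a one-dimensional problem along the ray and then exploit star-convexity together with closedness of $C$. I would fix a unit direction $\mathbf{v}$, parametrize the ray by $g(t) = \mathbf{y}_0 + t\mathbf{v}$ for $t \ge 0$, and study the scalar set $I = \{\, t \ge 0 : g(t) \in C \,\}$. I will interpret the ``closest intersection point of the ray with $C$'' as the point $p^\ast := g(t^\ast)$ at which the ray leaves the feasible region, where $t^\ast = \sup I$; the whole argument then consists of showing $t^\ast$ is well-defined and finite and that $p^\ast$ enjoys the two claimed properties.

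First I would establish the structure of $I$. Since $\mathbf{y}_0 \in \operatorname{int}(C)$, there is a ball around $\mathbf{y}_0$ contained in $C$, so $[0,\epsilon) \subseteq I$ for some $\epsilon > 0$ and $I$ is nonempty. Star-convexity of $C$ about $\mathbf{y}_0$ is exactly the tool that forces $I$ to be an interval: if $t \in I$, then the segment from $\mathbf{y}_0$ to $g(t)$, which is precisely $\{\, g(s) : 0 \le s \le t \,\}$, lies in $C$ by Definition~\ref{star_convex}, hence $[0,t] \subseteq I$. Thus $I$ is a left-closed interval with right endpoint $t^\ast = \sup I$, and in the bounded case $t^\ast < \infty$ (the relevant regime when the feasible region is bounded, as in the applications); when the ray never leaves $C$ there is simply no boundary point to produce.

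Next I would verify the two conclusions for $p^\ast$. For $p^\ast \in C$: pick $t_k \uparrow t^\ast$ with $g(t_k) \in C$, so that $p^\ast = \lim_k g(t_k) \in \overline{C}$; invoking that each $C_i$ is a closed set (the standard constraint form $\{\, \mathbf{y} : h_i(\mathbf{y}) \le 0 \,\}$ with continuous $h_i$), the intersection $C = \bigcap_i C_i$ is closed, so $\overline{C} = C$ and $p^\ast \in C$. For $p^\ast \in \partial C$: every $t > t^\ast$ satisfies $g(t) \notin C$, since $g(t) \in C$ would force $[0,t] \subseteq I$ and hence $t^\ast \ge t$ by the supremum property, a contradiction. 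Letting $t \downarrow t^\ast$ therefore produces points of $\mathbb{R}^n \setminus C$ converging to $p^\ast$, and combined with $p^\ast \in C$ this yields $p^\ast \in \overline{C} \cap \overline{\mathbb{R}^n \setminus C} = \partial C$.

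Finally I would connect this to the redundant-constraint claim by writing $I = \bigcap_i I_i$ with $I_i = \{\, t \ge 0 : g(t) \in C_i \,\}$ and observing that $p^\ast$ must lie on $\partial C_{i^\ast}$ for at least one binding index $i^\ast$, whereas a redundant constraint $C_j$ (one with $\bigcap_{i \ne j} C_i = C$) contributes no binding boundary and hence cannot shorten the reach along the ray, so selecting the nearest active-constraint crossing recovers $p^\ast$ and automatically ignores redundancy. The main obstacle I anticipate is the boundary claim rather than the membership claim: because $C$ is only star-convex and not convex, the ray may graze $\partial C$ at an intermediate parameter (e.g.\ a narrow spike attached to a disk), so I must make precise which crossing the statement designates and make the closedness of the constraint sets an explicit hypothesis, since without it $p^\ast$ need not belong to $C$.
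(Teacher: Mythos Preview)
Your proposal is correct and follows essentially the same route as the paper: parametrize the ray, take the supremum of feasible parameters, use star-convexity about $\mathbf{y}_0$ to obtain the interval structure, invoke closedness of $C$ (as a finite intersection of closed sets) for membership, and deduce boundary membership from the exterior points just beyond $t^\ast$. The only cosmetic difference is that the paper takes $t^\ast = \sup\{t : g(t)\in\operatorname{int}(C)\}$ whereas you take $t^\ast = \sup\{t : g(t)\in C\}$; your choice makes the ``$t>t^\ast \Rightarrow g(t)\notin C$'' step immediate, and your explicit flagging of closedness as a needed hypothesis and of the unbounded-ray case is more careful than the paper's version.
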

The proof of Proposition \ref{proposition_redundant} is given in Appendix \ref{proof_redundant}. As a result, the output $\mathbf{\hat{y}}$ computed using Eq. (\ref{2D_hop}) is guaranteed to be feasible for corresponding hard constraints $\mathcal{Y}_\mathbf{x}$. 

\paragraph{Extension to the Semi-Unbounded Case}

Since in the semi-unbounded problem, in which the feasible region extends indefinitely in some directions, making boundary determination challenging or ill-defined at infinity. $\mathcal{R}(\mathbf{v}_{{\theta}},\mathcal{Y}_\mathbf{x})$ given in 2-D scenario is impractical, as the intersection may not exist or could approach infinity. To address this issue, we introduce the following spherical homeomorphism mapping. The process begins with NN's raw outputs $\mathbf{z} = [\mathbf{z}_{{\theta}},{z}_{r}]^T, \mathbf{z}_{{\theta}}\in \mathbb{R}^{d},{z}_{r}\in \mathbb{R}$, where $d=2$ for current 2-D case while the following framework is also applicable to high-dimensional problem. To ensure the outputs are contained within a unit hypersphere, the transformations in Eqs. (\ref{recnection_theta}) and (\ref{recnection_r}) are applied to bound the raw outputs:
\begin{flalign}
    &\mathbf{v}_{{\theta}}  =
    \begin{cases}
    {\mathbf{{z}}_{{\theta}}}/{||\mathbf{{z}}_{{\theta}}||_2}, & \text{if } {z}_{r} \geq 0, \\
    {-\mathbf{{z}}_{{\theta}}}/{||\mathbf{{z}}_{{\theta}}||_2}, & \text{otherwise}.
    \end{cases}\label{recnection_theta}\\
    &\bar{z}_{r}  = \mathcal{B}(|{z}_{r}|),\label{recnection_r}
\end{flalign}
where $\mathbf{v}_{{\theta}}$ is direction vector, $\bar{z}_{r}$ is angle scale. Therefore, the (\ref{recnection_theta}) and (\ref{recnection_r}) can bound the output within unit hyper-sphere and avoid stagnation problem where we provide designing analyses in Section \ref{reconnection_theory}.

\begin{figure}[ht]
\begin{center}
\centerline{\includegraphics[width=0.8\columnwidth]{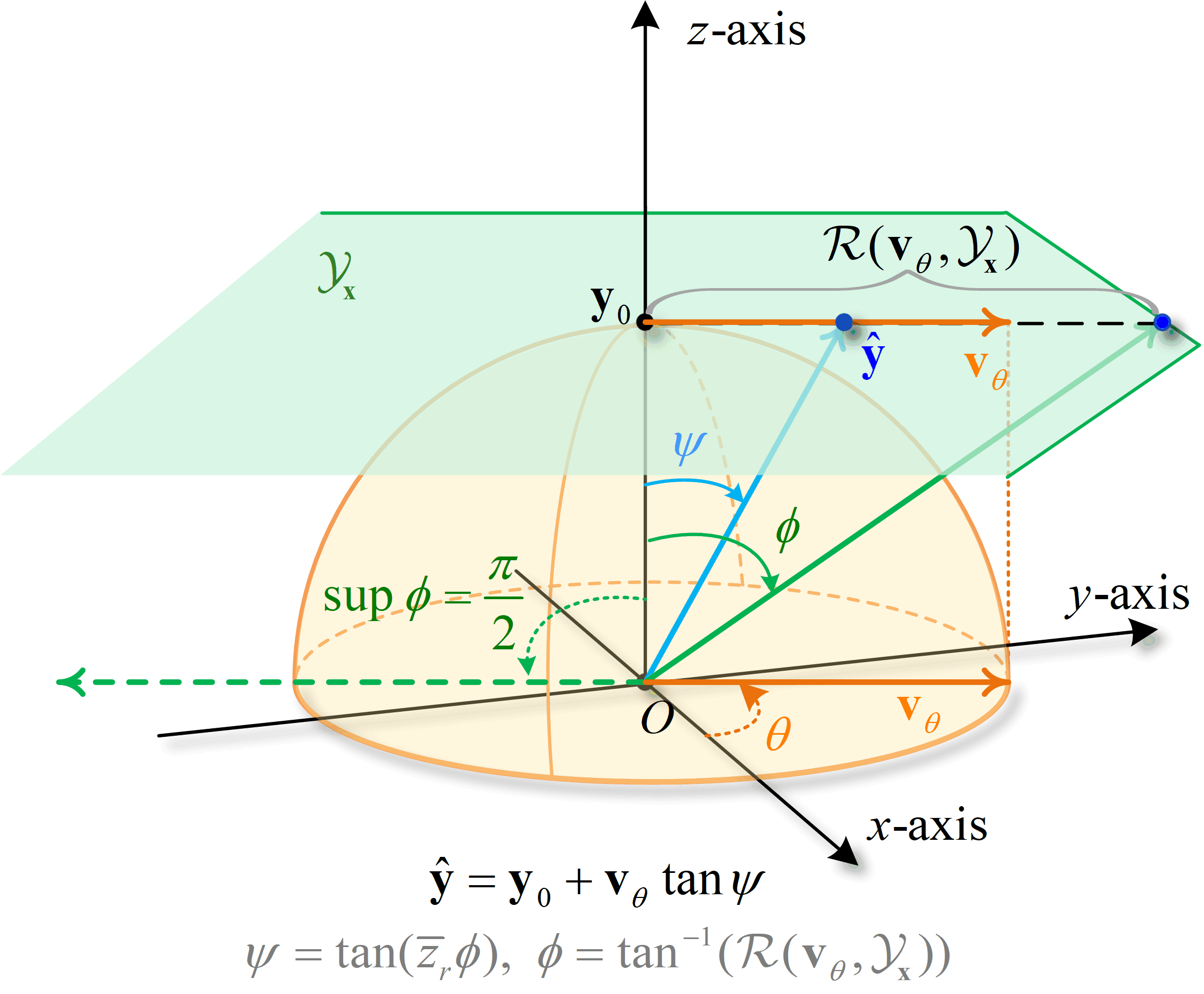}}
\vskip -0.1in
\caption{Sketch of the spherical coordinate transformation for semi-unbounded constraints. The 2-D plane (green plane) is elevated to a higher-dimensional system, where the distance $\mathcal{R} (\mathbf{v}_{\theta},\mathcal{Y}_{\mathbf{x}})$ in direction $\mathbf{v}_{\theta}$ from $\mathbf{y}_0$ to boundary, is mapped as the boundary angle $\phi$. Then NN's output ratio $\bar{z}_r$ and $\phi$ is transformed by Eq. (\ref{v_theta_transform}) to $\psi$, the inclination angle of blue ray, in horizontal direction $\mathbf{v}_{\theta}$. Finally, $\mathbf{\hat{y}}$ is the intersection of the blue ray and green plane. Furthermore, points at infinity in the green space correspond to the angle on equator where $\psi = \frac{\pi}{2}$.}
\label{polar-mapping-3d}
\end{center}
\vspace{-3em}
\end{figure}
Based on the polar sphere vector, $[\mathbf{v}_{\theta},\bar{z}_{r}]^T$, we set up polar coordinate system centered at $\mathbf{y}_0$. The 2-D homeomorphic mapping in Eq. (\ref{2D_hop}) can be extended to the HoP in semi-unbounded case and reformulated as follows:
\begin{flalign}\label{ND_hop}
    \mathcal{H}\colon\quad\mathbf{\hat{y}} = \mathbf{y}_0 +  \mathbf{v}_{\theta}\tan(\psi),
\end{flalign}
where  $ \psi$ is a angle defined as:
\begin{flalign} \label{v_theta_transform}
    &\psi = \bar{z}_r\phi,\quad\phi = \tan^{-1}(\mathcal{R}(\mathbf{v}_{\theta},\mathcal{Y}_\mathbf{x}))
\end{flalign}
To illustrate the essence of spherical mapping, we use 2-D optimization problem as an example. As shown in Fig.~\ref{polar-mapping-3d}, the original 2-D plane (green plane) is elevated into an additional $z$-axis dimension with unit distance from point O to $\mathbf{y}_0$. Here, $\mathbf{v}_{\theta}$ represents the unit direction vector of green plane. $\phi\in (0,\pi/2)$ represents the maximum angle between z-axis and the green ray extending to the dark green boundary in the direction defined by $\mathbf{v}_{\theta}$. The supremum of $\phi$, denoted as $\text{sup}\phi$, corresponds to the angle between z-axis and an asymptotic direction toward infinity. 


The blue ray defined by angle $\psi = \bar{z}_r\phi$, $\psi\in(0,\phi - \epsilon)$ with z-axis, is bounded by ratio $\bar{z}_r$, where $\epsilon$ is a small positive number to prevent divergence of the Jacobian determinant which is explained in Appendix~\ref{sec:phi_v_mapping}. Then, Eq. (\ref{ND_hop}) maps the angle $\psi$ to $\mathbf{\hat{y}}$ on the green plan which is the intersection of green plane and blue ray. Since $\phi$ is boundary angle while $\psi$ is bounded by $\phi$, the feasibility of intersection point $\mathbf{\hat{y}}$ is ensured.

By transforming the distance to angle, this approach can resolve the challenge of semi-unbounded regions. Building on the semi-unbounded design, HoP can be naturally extended to higher-dimensional and more general scenarios. The pseudocode for the HoP is presented in Algorithm \ref{alg:HoP_final_algorithm}.

\subsection{Resolving Stagnation in Polar Optimization via Reconnection}\label{reconnection_theory}

\begin{figure}[ht]
\begin{center}
\centerline{\includegraphics[width=0.9\columnwidth]{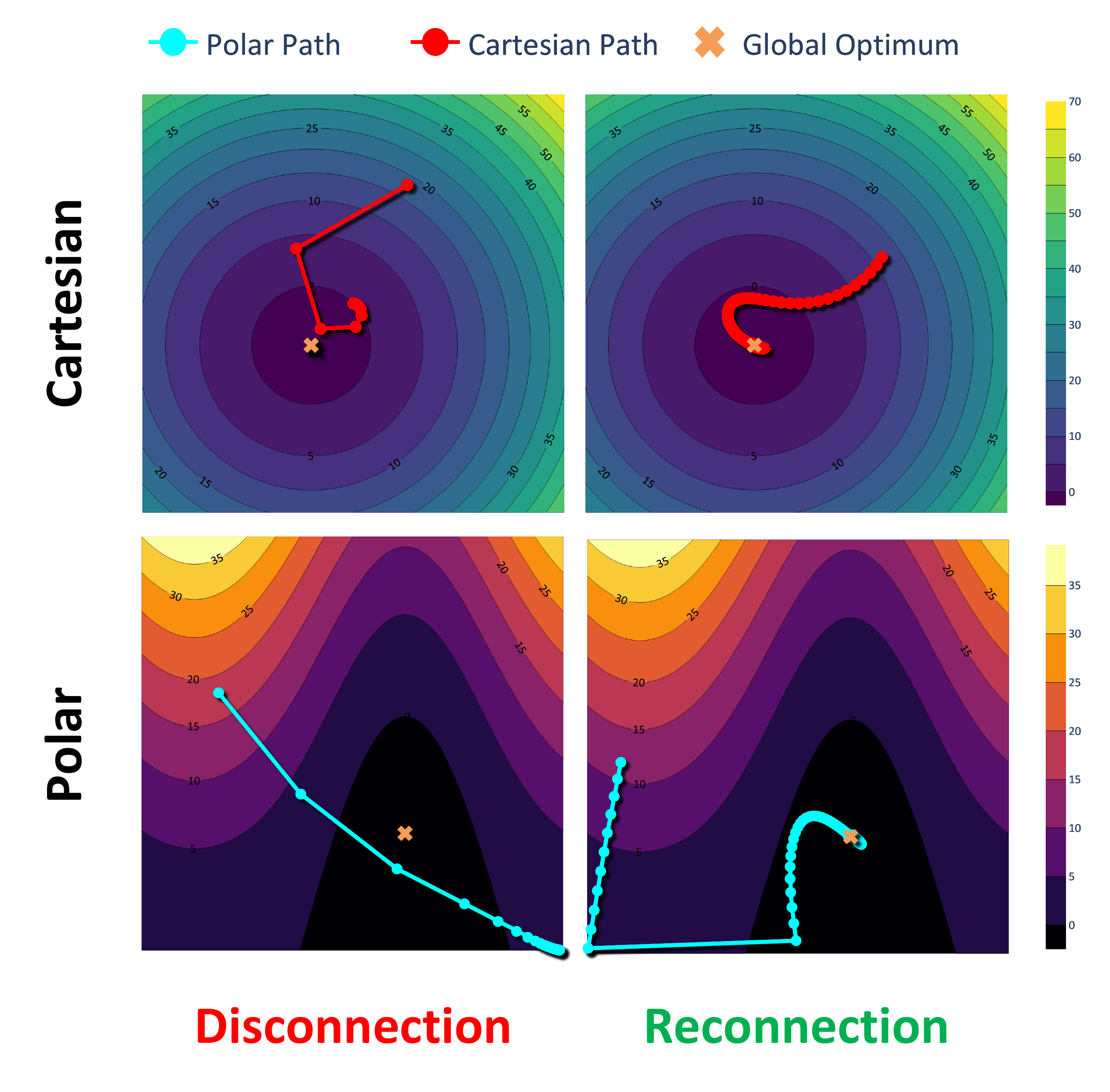}}
\vskip -0.2in
\caption{Comparison of optimization trajectories in Cartesian and polar coordinates. The left column demonstrates disconnection issues in polar coordinates, while the right column shows improved behavior with reconnection strategies.}
\label{polar-experiment}
\end{center}
\vskip -0.3in
\end{figure}

In this subsection, we analyze the reason for optimizer stagnation during training procedure of the polar optimization system, and provide two corresponding solutions, including the reconnection strategy given in Eqs. (\ref{recnection_theta}) and (\ref{recnection_r}). 
Firstly, since the use of polar coordinates for optimization brings unique challenges in radial and angular variables updating by simple bounded activation function. To facilitate understanding of the consequence of stagnation, we provide equivalent 2-D convergence sketch in Fig.~\ref{polar-experiment} to demonstrate the phenomenon. As illustrated in two left sub-figures of Fig. ~\ref{polar-experiment}, a critical issue is \textit{radial stagnation and angular freezing}, where the optimizer stagnates near $r = 0$ and unable to adjust the angular variable ${\theta}$. 

The most significant reason is the non-negativity constraint, $r\geq 0$. When $r$ reaches zero and hold the tendency to be negative, the radial updates are truncated. Meanwhile, the angular updates stagnate completely due to the vanishing gradient caused by $r=0$. Furthermore, this phenomenon is exacerbated when optimizer has large learning rate or momentum. Specifically, this stagnation is clearly observed in left sub-figure of Fig.~\ref{polar-experiment}. The detailed mathematical formulation of this phenomenon and theoretical analysis is provided in Appendix ~\ref{prop:radial_stagnation} and ~\ref{prop:LR_Momentum}.

To mitigate radial stagnation, we provide two alternative solution. The first one is dynamically scaled learning rate for radial updates where adjustable learning rate of $r$ avoid its updating truncation near $r = 0$. This strategy guarantees stability and prevents freezing, as rigorously analyzed in the Appendix ~\ref{prop:Dynamic_LR}.

Another more robust solution is geometric reconnection techniques, shown in bottom right sub-figure in Fig.~\ref{polar-experiment}. When $r < 0$, we expand the angular and radial domains with reconnecting the polar space by
\begin{flalign}
r \mapsto |r|, \quad \theta \mapsto \theta + \pi.
\end{flalign}
Moreover, to guarantee angular is continuous, we output $\theta\in \mathbb{R}$ which connects $\theta=0$ and $\theta=2\pi$ periodically. This reconnection design allows the optimizer to traverse across the regions that were truncated in previous polar space, enabling smoother transition and better exploration of the solution space. 


Therefore, the geometric reconnection strategies significantly enhance the stability and efficiency of optimization in polar coordinates. As shown in Fig.~\ref{polar-experiment} (bottom right), the proposed methods enable the optimizer to maintain smooth trajectories, effectively avoiding the disconnection issues observed in Fig.~\ref{polar-experiment} (bottom left). As a consequence, we apply similar reconnection design in Eq. (\ref{recnection_theta}) and Eq. (\ref{recnection_r}) to prevent truncation of $\mathbf{z}_{\theta}$ and $z_r$.

\begin{algorithm}[tb]
   \caption{HoP: Homeomorphic Polar Learning}
   \label{alg:HoP_final_algorithm}
\begin{algorithmic}
   \STATE Prepare parameters $\mathcal{X}_{\text{train}}$ and $\mathcal{X}_{\text{test}}$, initialize NN, polar center $\mathbf{y}_0$ for every instance in $\mathcal{X}_{\text{train}}$ and $\mathcal{X}_{\text{test}}$.
   \STATE TRAIN:
   \FOR{every epoch}
   \FOR{every batch data $\mathbf{x}$ in $\mathcal{X}_{\text{train}}$}
   \STATE Update $[\mathbf{{z}}_{\theta},{z}_r]^T = \text{NN}(\mathbf{x})$;
   \STATE Update $\mathbf{v}_{\theta}$ by Eq. (\ref{recnection_theta}), ${\bar{z}_r}$ by Eq. (\ref{recnection_r});
   \STATE Output estimated variables $\mathbf{\hat{y}}$ by Eq. (\ref{ND_hop});
   \STATE Update loss function $\mathcal{L}(\mathbf{\hat{y}})$;
   \STATE Backward propagation to update NN parameters.
   \ENDFOR
   \ENDFOR
   \STATE TEST:
   \FOR{every batch data $\mathbf{x}$ in $\mathcal{X}_{\text{test}}$}
   \STATE Update $[\mathbf{{z}}_{\theta},{z}_r]^T  = \text{NN}(\mathbf{x})$;
   \STATE Update $\mathbf{v}_{\theta}$ by Eq. (\ref{recnection_theta}), ${\bar{z}_r}$ by Eq. (\ref{recnection_r});
   \STATE Output estimated variables $\mathbf{\hat{y}}$ by Eq. (\ref{ND_hop});
   \STATE Compute gap between $\mathcal{L}(\mathbf{\hat{y}})$ and $\mathcal{L}(\mathbf{{y}}^*)$;
   \ENDFOR
\end{algorithmic}
\end{algorithm}
\vskip -0.1in
\section{Experiments}
In following experiments we evaluate HoP by comparing with other methods as baselines in three aspects, including optimality, feasibility and computation efficiency. Comparative experiments with traditional optimizers, DC3 \cite{donti2021dc3} and other NN-based L2O approaches are conducted to validate HoP's effectiveness. All NNs follow a uniform architecture: a 3-layer multilayer perceptron (MLP) with ReLU activation functions. Since other NN-based methods such as DC3 typically rely on penalty functions to handle constraints, where the penalty is defined as: 
\begin{flalign}
    \mathcal{P}(\mathbf{\hat{y}}, \mathcal{Y}_\mathbf{x}) = \mathbb{I}(\mathbf{\hat{y}}\notin \mathcal{Y}_\mathbf{x})\cdot\text{dist}(\mathbf{\hat{y}}, \mathcal{Y}_\mathbf{x})
\end{flalign}
where $\mathbb{I}(\hat{\mathbf{y}}\notin\mathcal{Y}_\mathbf{x})$ indicates constraint violations, and $\text{dist}(\mathbf{\hat{y}}, \mathcal{Y}_\mathbf{x})$ quantifies the distance to the constraint set. 

\begin{itemize}
    \item Optimizer: For synthetic benchmarks (Experiments \ref{exp:Synthetic}), the Sequential Least Squares Programming (SLSQP) is used as the solver. For the quality-of-service-aware multi-input-single-output communication system weighted sum rate (QoS-MISO WSR) problem in Experiment \ref{miso_prob}, Splitting Conic Solver (SCS) \cite{diamond2016cvxpy} and fractional programming (FP) \cite{shen2018fractional} is applied for alternative optimization as the solver baseline.
    
    \item NN-SSL (Self-Supervised Learning): The loss function consists individually of the objective function without any penalty term, defined as $\mathcal{L}(\mathbf{\hat{y}}) = f_{\mathbf{x}}(\mathbf{\hat{y}})$.
    
    \item NN-SL: The loss minimizes the mean squared error (MSE) between predictions $\hat{\mathbf{y}}$ and the target labels $\mathbf{y}^*$, expressed as $\mathcal{L}(\mathbf{\hat{y}}) = (\mathbf{\hat{y}} - \mathbf{y}^*)^2$.
    
    \item NN-SL-SC (Soft-Constraint + Supervised Learning): The loss function integrates the MSE and a soft constraint penalty, expressed as $\mathcal{L}(\mathbf{\hat{y}}) = (\mathbf{\hat{y}} - \mathbf{y}^*)^2 + \lambda \mathcal{P}(\mathbf{\hat{y}}), \lambda\geq 0$.
    
    \item NN-SSL-SC (Soft-Constraint + Self-Supervised Learning): The loss incorporates both the objective function and a soft constraint penalty: $\mathcal{L}(\mathbf{\hat{y}}) = f_{\mathbf{x}}(\mathbf{\hat{y}}) + \lambda \mathcal{P}(\mathbf{\hat{y}})$.
    
    \item DC3: This baseline follows \cite{donti2021dc3}, using the soft-constraint loss of NN-SSL-SC with additional gradient post-corrections to improve feasibility.
\end{itemize}
To ensure a fair and comprehensive evaluation, we utilize the same metrics proposed in DC3. Specifically, the metrics in Tables \ref{Polygon-table} -- \ref{miso-table} are defined as follows: (1) Obj. Value represents the objective function value, $f_{\mathbf{x}}(\mathbf{\hat{y}})$, achieved by the solver. (2) Max. Cons and Mean. Cons denote the $\text{max}{\mathcal{P}(\mathbf{\hat{y}})}$ and $\text{mean}({\mathcal{P}(\mathbf{\hat{y}})})$, respectively. (3) Vio. Rate is violation rate which measures the percentage of predicted infeasible solutions. (4) Time reflects the computational efficiency of each method. Moreover, the blue numbers in following tables denote the results from HoP, while the red numbers indicate the worst results or violation for corresponding metrics.
Further experimental setup details are provided in Appendix \ref{exp_setting_details}.

\subsection{Synthetic Benchmarks} \label{exp:Synthetic}

\begin{table*}[ht]
\caption{Experimental results on an 8-sided polygon constraint with varying constraints, using the sinusoidal QP as the objective function. The results demonstrate the performance of different methods in terms of objective value, constraint satisfaction, violation rate, and computation time. }
\label{Polygon-table}
\vskip 0.1in
\begin{center}
\begin{small}
\begin{sc}
\begin{tabular}{lrrrrr}
\toprule
Method & Obj. Value $\downarrow$ & Max. Cons $\downarrow$ & Mean. Cons $\downarrow$ & Vio. Rate $\downarrow$ & Time / \SI{}{\milli\second} $\downarrow$\\
\midrule
Optimizer& {-29.7252} & 0.0000 & 0.0000 & 0.00\% & {\color{red}0.07104} \\ 
HoP & {\color{blue}-29.7170}& {\color{blue}0.0000}& {\color{blue}0.0000} &{\color{blue}0.00\%} & {\color{blue}0.00444}\\ 
NN-SSL   & -29.7481 & {\color{red} 0.0334} & {\color{red}0.0029} &{\color{red}18.10\%}&0.00004 \\ 
NN-SL   & -29.7247 & {\color{red}0.0153} & {\color{red}0.0001} & {\color{red}7.47\%}&0.00004 \\ 
NN-SSL-SC   & {\color{red}-29.5601} & {\color{red}0.0108} & {\color{red}0.0001} & {\color{red}2.94\%}&0.00004\\ 
NN-SL-SC   & -29.7203 & {\color{red}0.0080} & {\color{red}0.0001} & {\color{red}0.35\%}&0.00004 \\ 
DC3    & -29.6893 & 0.0000 & 0.0000 & 0.00\% & 0.01491 \\ 
\bottomrule
\end{tabular}
\end{sc}
\end{small}
\end{center}
\vskip -0.2in
\end{table*}

To evaluate Hop, we consider three different synthetic problems: the polygon-constrained problem and the $\ell_p$-norm problem and the high-dimensional semi-unbounded problem. The polygon-constrained problem serves as the primary validation for convergence, the $\ell_p$-norm problem evaluates the capability in addressing star-convex scenarios, and the high-dimensional semi-unbounded problem tests the method’s scalability and effectiveness in handling complex constraints in higher dimensions.
\paragraph{(a) Polygon-Constrained Problem}
As the first benchmark experiment, we choose sinusoidal quadratic programming (QP) as the non-convex objective function with linear constraints. The problem is formulated as follows:
\begin{flalign}\label{QP_sin_Poly}
    \min_{\mathbf{y}} \quad\frac{1}{2}\mathbf{y}^T\mathbf{Q}\mathbf{y} + \mathbf{p}^T\sin(\beta\mathbf{y}),
    \quad \text{s.t.}\quad \mathbf{A}\mathbf{y} \preceq \mathbf{b},
\end{flalign}
where matrix $\mathbf{Q}$ is a positive semi-definite matrix, vector $\mathbf{p}$ is a parameter vector, and scalar $\beta$ controls the frequency of the sinusoidal terms, which introduces non-convexity into the objective function. The constraint are defined by matrix $\mathbf{A}$ and vector $\mathbf{b}$. In this problem, the parameter $\mathbf{x}$ is $\mathbf{b}$, consistent with the setup in \cite{donti2021dc3}. 


Table \ref{Polygon-table} presents the results for Problem (\ref{QP_sin_Poly}) under an 8-sided polygon constraint. HoP achieves perfect constraint satisfaction, 0\% violation rate, while significantly outperforming NN-based methods, which exhibit violation rates up to 18.10\%. In terms of computational efficiency, HoP is over 15× faster than the optimizer. Moreover, objective value, -29.7170, given by HoP closely approaches the optimizer's -29.7252 and surpasses DC3's -29.6893. 

These results demonstrate HoP’s ability to enforce hard constraints rigorously, achieve nearer optimal solutions, and operate with superior computational efficiency in simple 2-D optimization problem which validate its effectiveness.

\begin{table*}[ht]
\caption{Experimental results on the $\ell_{0.5}$-norm constraint with $b=1$, using the QP as the objective function. The results demonstrate the performance of different methods in terms of objective value, constraint satisfaction, violation rate, and computation time.}
\label{lp-table}
\vskip 0.1in
\begin{center}
\begin{small}
\begin{sc}
\begin{tabular}{lrrrrr}
\toprule
Method & Obj. Value $\downarrow$ & Max. Cons $\downarrow$ & Mean. Cons $\downarrow$ & Vio. Rate $\downarrow$ & Time / \SI{}{\milli\second} $\downarrow$\\
\midrule
Optimizer & {-0.4824} & 0.0000 & 0.0000 & 0.00\% & {\color{red}1.49371} \\ 
HoP & {\color{blue}-0.3886} & {\color{blue}0.0000}& {\color{blue}0.0000} &{\color{blue}0.00\%} &{\color{blue}0.00946}\\
NN-SSL   & -1.2227 & {\color{red}67.6524} & {\color{red}17.5511} & {\color{red}100.00\%} &0.00886\\
NN-SL   & -0.5285 & {\color{red}10.2304} & {\color{red}2.1575} & {\color{red}99.96\%} &0.00886\\ 
NN-SSL-SC   & -0.0679 & {\color{red}0.4356} & {\color{red}0.0007} & {\color{red}0.75\%} &0.00886\\ 
NN-SL-SC   & {\color{red}-0.0132} & {\color{red}0.3018} & {\color{red}0.0002} & {\color{red}0.02\%} &0.00886\\ 
DC3    & -0.0730 & 0.0000 & 0.0000 & 0.00\% & 0.02295       \\ 
\bottomrule
\end{tabular}
\end{sc}
\end{small}
\end{center}
\vskip -0.2in
\end{table*}
\begin{table*}[t]
\caption{Results on high-dimension problem for $20$ variables with $20$ linear constraints. We use $14,000$ instances for training while $6,000$ samples for testing. The constraints are fixed in this problem while the objective function is different in each instance.}
\label{Table:high_dim}
\vskip -0.15in
\begin{center}
\begin{small}
\begin{sc}
\begin{tabular}{lrrrrr}
\toprule
Method & Obj. Value $\downarrow$ & Max. Cons $\downarrow$ & Mean. Cons $\downarrow$ & Vio. Rate $\downarrow$ & Time / \SI{}{\milli\second} $\downarrow$ \\
\midrule
Optimizer& {-7.6901}& 0.0000& 0.0000 & 0.00\%&{\color{red}4.87273}\\ 
Hop &{\color{blue}{-4.7683}}& {\color{blue}0.0000}& {\color{blue}0.0000} & {\color{blue}0.00\%} &{\color{blue}0.09262}\\ 
NN-SSL   & 0.2772&{\color{red}0.2079}& {\color{red}0.0115} & {\color{red}100.00\%} &0.04763\\ 
NN-SL   &29.5046& 0.0000& 0.0000 & 0.00\% &0.04742\\ 
NN-SSL-SC   & 17.3966&{\color{red}0.1758}& {\color{red}0.0001} & {\color{red}22.18\%} &0.05047\\ 
NN-SL-SC   & {\color{red}43.9163}& 0.0000& 0.0000&0.00\%&0.04812\\ 
DC3    & 8.9205& 0.0000&   0.0000&0.00\%  &  0.08768\\ 
\bottomrule
\end{tabular}
\end{sc}
\end{small}
\end{center}
\vskip -0.2in
\end{table*}
\paragraph{(b) $\ell_p$-norm Problem}
The second problem, an $\ell_p$-norm problem, features a QP objective function and star-convex constraints:
\begin{flalign}\label{eq:lp-norm}
\min_{\mathbf{y}}\quad \frac{1}{2} \mathbf{y}^T \mathbf{Q} \mathbf{y} + \mathbf{p}^T \mathbf{y},\quad\text{s.t.}\quad ||\mathbf{y}||_{\ell_{p}}^p \leq {b}
\end{flalign}
where vector $\mathbf{p}$ serves as the input variable $\mathbf{x}$ defined in Eq. (\ref{eq:lp-norm}) to the problem.  Unlike the linear and convex constraints in Section \ref{exp:Synthetic}(a), this problem introduces non-linear, non-convex star-convex constraints, which present a more challenging optimization scenario. 

As shown in Table \ref{lp-table}, leveraging on topological equivalence given by HoP's homeomorphic mapping, HoP achieves perfect constraint satisfaction (0\% violation rate) with an objective value of -0.3886, significantly outperforming DC3's -0.0730. In contrast, DC3's gradient post-corrections often introduce bias by diverging from the objective-minimizing direction, leading to suboptimal solutions. Especially in the $\ell_p$ norm optimization, its complex constraint boundary brings more suboptimal corrections to DC3's gradient post-corrections, which highly rely on the gradient of the constraints.

Computationally, HoP is over 150$\times$ faster than the traditional optimizer and matches the speed of NN-based methods. Unlike other basic NN-based approaches which exhibit violative results and inferior objective values, HoP strictly enforces constraints while delivering superior solution quality. These results highlight HoP's effectiveness in handling star-convex constraints.

\vspace{-0.5em}
\paragraph{(c) High-Dimensional Semi-Unbounded Problem}

\begin{table*}[ht]
\caption{Results on QoS-MISO WSR problem for $U=3$ and $M=4$. The training set has $2,800$ instances while test set has $1,200$ instances. Objective function is MISO WSR, where constraints are QoS and power limitation.}
\label{miso-table}
\vskip -0.15in
\begin{center}
\begin{small}
\begin{sc}
\begin{tabular}{lrrrrr}
\toprule
Method & Obj. Value $\uparrow$ & Max. Cons $\downarrow$ & Mean. Cons $\downarrow$ & Vio. Rate $\downarrow$ & Time / \SI{}{\milli\second} $\downarrow$ \\
\midrule
Optimizer& ${1.3791}$& 0.0000&0.0000&0.00\% &{\color{red}11.31842}\\ 
HoP & {{\color{blue}1.1393}}& {\color{blue}0.0000}& {\color{blue}0.0000}& {\color{blue}0.00\%}&{\color{blue}0.99895}\\ 
NN-SSL   & 20.6766& {\color{red}1738.1122}&{\color{red}395.4313} &{\color{red}100.00\%}& 0.01723\\ 
NN-SL   & 3.2477& {\color{red}0.1636}& {\color{red}0.0559}& {\color{red}100.00\%} &0.01984\\ 
NN-SSL-SC   & 0.4150& {\color{red}0.2886}& {\color{red}0.0029}& {\color{red}18.25\%}&0.01834\\ 
NN-SL-SC   & {\color{red}0.3244}& {\color{red}0.7794}& {\color{red}0.0035} &{\color{red}23.00\%}&0.01864\\ 
DC3    & 0.3381& 0.0000&  0.0000&0.00\% &{5.53113}     \\ 
\bottomrule
\end{tabular}
\end{sc}
\end{small}
\end{center}
\vskip -0.1in
\end{table*}



This experiment demonstrates the ability of Hop to handle high-dimensional and semi-unbounded problem. As same as the previous experiments, the problem follows the sinusoidal QP formulation introduced earlier in Eq. (\ref{QP_sin_Poly}). Here, $\mathbf{p}$ represents the input parameter $\mathbf{x}$ for the NN model. As shown in Table \ref{Table:high_dim},  HoP consistently satisfies all constraints without any violations, confirming its robustness and feasibility under these challenging conditions. Moreover, HoP achieves the nearly optimal objective value -4.7683 among all methods, approaching to the traditional optimizer -7.6901 while maintaining strict constraint satisfaction. Otherwise, HoP still hold the advantages on computation complexity which is 52$\times$ faster than optimizer. 

Notably, NN-SL and NN-SL-SC also achieve 0 violation rates in this high-dimensional semi-unbounded experiment. This is likely due to the expanded feasible region in such scenarios, where constraints are less tight, and the optimal solution lies away from the boundary. However, it does not imply that SL-based methods can theoretically guarantee hard constraint satisfaction under all conditions. 

\begin{figure}[ht]
\begin{center}
\centerline{\includegraphics[width=0.7\columnwidth]{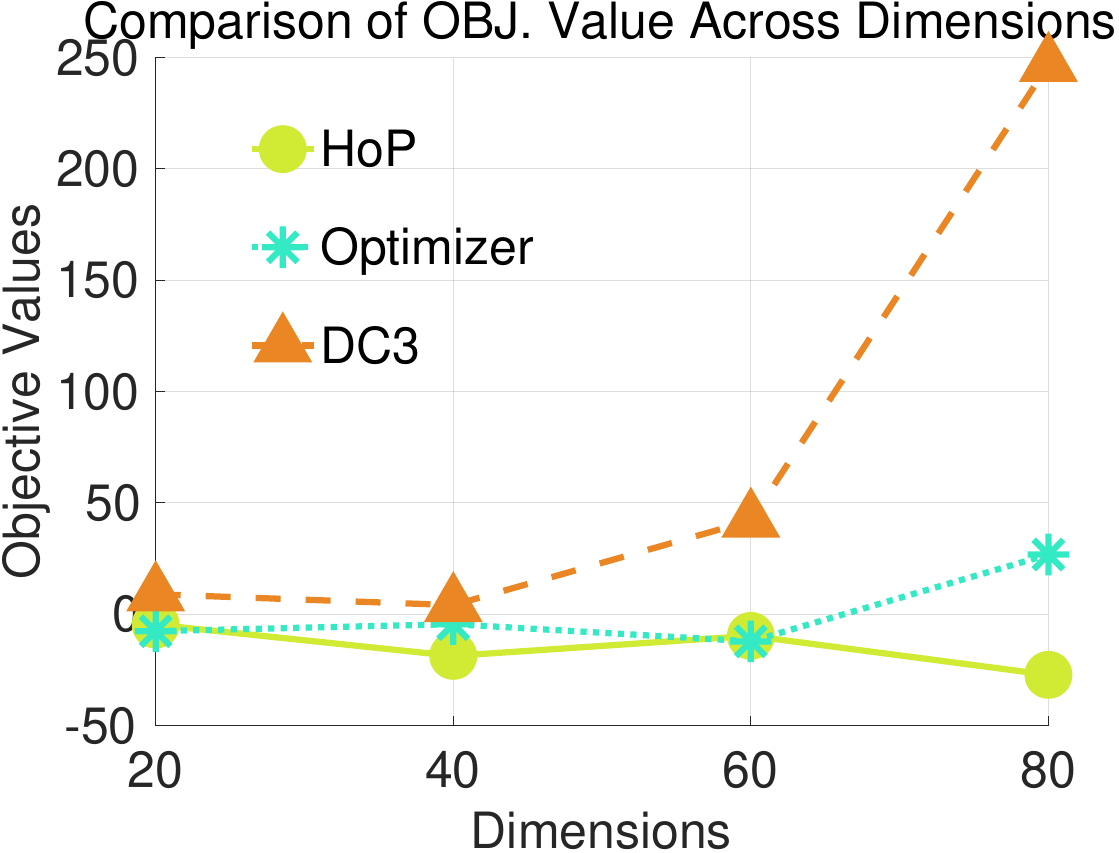}}
\caption{Comparison of objective values across methods under different dimensional settings.}
\label{dim_comp}
\end{center}
\vspace{-3em}
\end{figure}

To further evaluate the performance of different methods under higher-dimensional settings, we conduct analyses across 20, 40, 60, and 80 dimensions. As shown in Fig. \ref{dim_comp}, HoP consistently performs near the optimizer in terms of objective value and exhibits a trend better than the optimizer's performance as dimensionality increases. In contrast, the DC3 method struggles to surpass the optimizer's objective value and shows a clear decline in performance as the dimensionality rises. These results highlight the robustness and scalability of HoP, even in high-dimensional scenarios.

\subsection{Application -- QoS-MISO WSR Problem}\label{miso_prob}
In this experiment, we implement HoP to address the QoS-MISO WSR problem. The QoS-MISO WSR problem is a well-known NP-hard problem with non-linear constraints in communication engineering \cite{tang2023energy,niu2021qos}. In most studies on the QoS-MISO WSR problem, researchers commonly employ alternative optimization methods to obtain solutions, which often require significant computational resources. The problem is formulated as follows: 
\begin{subequations}\label{WSR_pro}
\vskip -0.3in
\begin{flalign}
    \max_{\mathbf{w}_k} &\sum_{k=1}^U \alpha_k \log_{2}(1+\text{SINR}_k)\tag{\ref{WSR_pro}}\\
    \text{s.t.}\
    &\log_{2}(1+\text{SINR}_k)\geq \delta_k\label{QoS_cons}\\
    &\sum_{k=1}^U \text{Tr}(\mathbf{w}_k\mathbf{w}_k^H)+\text{P}_\text{c} \leq \text{P}_\text{max}\label{power}
\end{flalign}
\end{subequations}
where $\mathbf{w}_k \in \mathbb{C}^{M}$ is the beamformer, $(.)^H$ represents Hermitian transpose, $\mathbf{h}_k\in \mathbb{C}^{M}$ denotes channel state information, and $\sigma_k^2 \in \mathbb{R}$ is channel noise power.  $\alpha_k\in \mathbb{R}$ denotes priority weight for $U$ users, $\delta_k$ represents the QoS requirements. $\text{P}_\text{max}$ and $\text{P}_\text{c}$ define the system maximum power and circuit power consumption, respectively. The signal-interference-noise-ratio $\text{SINR}_k$ is defined as 
\begin{flalign}
    &\text{SINR}_k = \frac{\mathbf{w}_k^H\mathbf{h}_k\mathbf{h}_k^H\mathbf{w}_k}{\sum_{j\neq k}^U \mathbf{w}_j^H\mathbf{h}_k\mathbf{h}_k^H\mathbf{w}_j+\sigma_k^2}
\end{flalign}
Given that wireless resource allocation requires real-time optimization strategies to effectively manage wireless resources with diverse channel state information, we select $\mathbf{h}_k$ as the input of NN. This selection is crucial as $\mathbf{h}_k$ significantly influences both the constraints and the objective function in Eq. (\ref{WSR_pro}). For a comprehensive understanding of how we apply HoP to the QoS-MISO WSR problem, please refer to Appendix \ref{QOSMISO_polar}.

According to Table \ref{miso-table}, both HoP and DC3 achieve perfect constraint satisfaction, with $0\%$  violation rate. However, the mechanisms of penalty augmentation and post-correction in DC3 create a multi-objective optimization dilemma, fundamentally impeding it to simultaneously guarantee solution feasibility and pursue optimality. This inherent conflict between feasibility and objective minimization manifests as a optimality gap degradation in both of MISO and high-dimensional scenarios, whereas HoP eliminates such competing objectives through homeomorphic mapping. NN-based methods are consistent with previous experiments, struggling with constraint satisfaction. NN-SSL and NN-SL have extremely high Max, Cons and a Mean. Cons, with $100\%$ violation rate, showing that it fails to guarantee feasibility. NN-SSL-SC and NN-SL-SC reduce constraint violations slightly, with violation rate of $18.25\%$ and $23.00\%$, respectively. But they still fall far short of the perfect satisfaction achieved by HoP and DC3. For computation efficiency, HoP is slightly slower than other NN solvers, while it still gains a huge speedup -- 10 times faster than traditional algorithm and 5 times faster than DC3.

The results clearly showcase the superiority of the HoP in balancing objective optimization and constraint satisfaction. While the Optimizer (SCS+FP) achieves the best objective value, HoP obtains a very comparative  result, which outperforms other NN solvers. Moreover, HoP maintains perfect constraint satisfaction, matching the performance of DC3 in violation control. Thus, HoP is a highly effective and reliable method for solving the problem in engineering such as the QoS-MISO WSR problem.

\section{Conclusion}
In this work, we propose HoP, a novel L2O framework for solving hard-constrained optimization problems. The proposed architecture integrates NN predictions with a homeomorphic mapping, which transforms NN's outputs from spherical polar space to Cartesian coordinates, ensuring solution feasibility without extra penalties or post-correction.  Through extensive experiments encompassing both synthetic benchmark tasks and real-world applications, we demonstrate that HoP consistently outperforms existing L2O solvers, achieving superior optimality while maintaining zero constraint violation rates.

\bibliography{lib}
\bibliographystyle{icml2025}

\newpage
\appendix
\onecolumn

\section{Proof of Proposition \ref{proposition_redundant}}\label{proof_redundant}


\begin{proof} 
Since $ C $ is star-convex with respect to $ \mathbf{y}_0 $, for any point $ \mathbf{y} \in C $, the line segment connecting $ \mathbf{y}_0 $ and $ \mathbf{y} $ is entirely contained in $ C $. That is, for all $ t \in [0, 1] $, we have:
\begin{flalign}
   (1-t)\mathbf{y}_0 + t\mathbf{y} \in C.
\end{flalign}
Then we consider a ray originating from $ \mathbf{y}_0 $ in the direction of a unit vector $ \mathbf{v}_{\theta} \in \mathbb{R}^n $. The ray can be parameterized as:
\begin{flalign}
   R = \{ \mathbf{y}_0 + t \mathbf{v} : t \geq 0 \},
\end{flalign}
where $ t \geq 0 $ is the parameter along the ray. Since $ \mathbf{y}_0 \in \operatorname{int}(C) $, the ray $ R $ starts inside $ C $. By the Definition \ref{star_convex}, the ray $ R $ must intersect $ C $, and the portion of the ray close to $ \mathbf{y}_0 $ is entirely contained in $ C $. Let $ t^* $ be the supremum of the set of parameters $ t \geq 0 $ such that $ \mathbf{y}_0 + t v \in \operatorname{int}(C) $ which is given as:
\begin{flalign}
   t^* = \sup \{ t \geq 0 : \mathbf{y}_0 + t v \in \operatorname{int}(C) \}.
\end{flalign}
   Since $ \operatorname{int}(C) $ is open and $ \mathbf{y}_0 \in \operatorname{int}(C) $, this set is non-empty and $ t^* $ exists. Define:
\begin{flalign}
   \mathbf{y}_1 = \mathbf{y}_0 + t^* v.
\end{flalign}
By construction, the point $ y_1 $ satisfies the following: (1) For any $ t < t^* $, $ \mathbf{y}_0 + t \mathbf{v} \in \operatorname{int}(C) $; (2) For any $ t > t^* $, $ \mathbf{y}_0 + t \mathbf{v} \notin C $. Therefore, $ \mathbf{y}_1 $ lies on the boundary of $ C $, i.e., $ \mathbf{y}_1 \in \partial C $. Furthermore, since the ray is continuous and $ C $ is closed (as the finite intersection of closed sets), $ \mathbf{y}_1 \in C $.
For any ray originating from $ \mathbf{y}_0 $, the closest intersection point $ \mathbf{y}_1 $ belongs to $ C $ and lies on its boundary $ \partial C $.

\end{proof}

\section{Jacobian Analysis and Measure Distortion}
\label{sec:phi_v_mapping}

\noindent
This section presents a detailed analysis of the transformation
\begin{flalign}
\label{eq:phi_v_mapping}
\mathbf{\hat{y}}(\psi, \mathbf{v}_{\theta}) \;=\; \mathbf{y}_0 + \mathbf{v}_{\theta}\tan(\psi),
\end{flalign}
where $\psi \in [0, \tfrac{\pi}{2})$ is a scalar parameter, and $\mathbf{v}_{\theta} \in \mathbb{R}^d$ is a unit vector (i.e., $\|\mathbf{v}_{\theta}\|=1$). In this formulation, the direction $\mathbf{v}_{\theta}$ is treated as a free variable on the unit sphere $S^{d-1}$, while $\psi$ governs the radial displacement via the function $\tan(\psi)$. The mapping in Eq. \eqref{eq:phi_v_mapping} corresponds to the higher-dimensional homeomorphic mapping introduced in Eq.~\eqref{ND_hop}. It provides a diffeomorphic embedding of the parameter space $\mathcal{P}$ into $\mathbb{R}^d$, where $\mathcal{P}$ is defined as:
\begin{flalign}
\mathcal{P} \;=\; \bigl\{(\psi, \mathbf{v}_{\theta}) \mid \psi \in [0, \tfrac{\pi}{2}),\; \mathbf{v}_{\theta}\in S^{d-1}\bigr\}, \notag
\end{flalign}

The analysis below derives the Jacobian determinant of this transformation and discusses the measure distortion that arises as $\psi$ approaches $\tfrac{\pi}{2}$, where $\tan(\psi)$ diverges. These results provide insight into the geometric and numerical properties of the homeomorphic mapping when applied to optimization tasks in semi-unbounded domains.

\subsection{Jacobian Determinant}
To characterize local volume distortion, we compute the Jacobian determinant \cite{spivak2018calculus}. The mapping 
$\mathbf{\hat{y}}(\psi, \mathbf{v}_{\theta})$ is defined in Eq.~\eqref{eq:phi_v_mapping}. The total derivative 
$\mathrm{D}\mathbf{\hat{y}}(\psi,\mathbf{v}_{\theta})$ is a $d \times d$ matrix whose columns represent the partial derivatives of $\mathbf{\hat{y}}$ with respect to $\psi$ and to the $(d-1)$ degrees of freedom on the unit sphere $S^{d-1}$. Specifically:

\paragraph{(a) Derivative w.r.t.\ $\psi$.} For fixed $\mathbf{v}_{\theta}$,
\begin{flalign}
  \frac{\partial}{\partial \psi}\,\bigl(\tan(\psi)\,\mathbf{v}_{\theta}\bigr)
  \;=\;
  \sec^2(\psi)\,\mathbf{v}_{\theta}.
\end{flalign}
\paragraph{(b) Derivatives w.r.t.\ the sphere parameters $\mathbf{v}_{\theta}$.} 

\begin{lemma}[Tangent Space]
\label{lemma_tangent_space}
Let $\mathbf{v}_{\theta} \in S^{d-1}$ be a unit vector on the sphere in $\mathbb{R}^d$, satisfying $\|\mathbf{v}_{\theta}\|=1$. Then any infinitesimal variation $\mathrm{d}\mathbf{v}_{\theta}$ must lie in the tangent space $T_{\mathbf{v}_{\theta}}S^{d-1}$, given by:
$$
  \mathbf{v}_{\theta} \cdot \frac{\mathrm{d}\mathbf{v}_{\theta}}{\mathrm{d}\theta} = 0
  \quad\Longrightarrow\quad
  \mathrm{d}\mathbf{v}_{\theta} \in T_{\mathbf{v}_{\theta}}S^{d-1}.
$$
The tangent space $T_{\mathbf{v}_{\theta}}S^{d-1}$ is a $(d-1)$-dimensional subspace of $\mathbb{R}^d$.
\end{lemma}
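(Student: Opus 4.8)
The plan is to obtain the orthogonality relation by differentiating the defining constraint of the sphere, and then to identify the tangent space with the orthogonal complement of $\mathbf{v}_{\theta}$, from which the dimension count follows immediately. The key observation is that membership in $S^{d-1}$ is the single scalar constraint $\|\mathbf{v}_{\theta}\|^2 = 1$, so all admissible variations are constrained in exactly one direction.

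First I would make the notion of ``infinitesimal variation'' precise by considering any smooth curve $\gamma\colon(-\varepsilon,\varepsilon)\to S^{d-1}$ with $\gamma(0)=\mathbf{v}_{\theta}$, so that $\mathrm{d}\mathbf{v}_{\theta}$ corresponds to the velocity $\gamma'(0)$. Since $\gamma(t)\in S^{d-1}$ for all $t$, the curve satisfies $\gamma(t)\cdot\gamma(t)=1$ identically. Differentiating this identity with respect to $t$ and evaluating at $t=0$ yields
\begin{flalign}
2\,\gamma(0)\cdot\gamma'(0)=0
\quad\Longrightarrow\quad
\mathbf{v}_{\theta}\cdot\frac{\mathrm{d}\mathbf{v}_{\theta}}{\mathrm{d}\theta}=0,
\end{flalign}
which establishes that every tangent vector is orthogonal to $\mathbf{v}_{\theta}$. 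This gives the inclusion $T_{\mathbf{v}_{\theta}}S^{d-1}\subseteq\{\mathbf{u}\in\mathbb{R}^d : \mathbf{u}\cdot\mathbf{v}_{\theta}=0\}$.

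Next I would argue the reverse inclusion to pin down the tangent space exactly, rather than merely as a subset. Given any $\mathbf{u}\in\mathbb{R}^d$ with $\mathbf{u}\cdot\mathbf{v}_{\theta}=0$, I would exhibit an explicit curve realizing it as a velocity, for instance $\gamma(t)=\cos(t\|\mathbf{u}\|)\,\mathbf{v}_{\theta}+\sin(t\|\mathbf{u}\|)\,\mathbf{u}/\|\mathbf{u}\|$ (a great circle), which lies on $S^{d-1}$ for all $t$ and has $\gamma'(0)=\mathbf{u}$. Hence $T_{\mathbf{v}_{\theta}}S^{d-1}=\{\mathbf{v}_{\theta}\}^{\perp}$, the orthogonal complement of the line spanned by $\mathbf{v}_{\theta}$. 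Because $\mathbf{v}_{\theta}\neq\mathbf{0}$, that line is one-dimensional, so its orthogonal complement in $\mathbb{R}^d$ has dimension $d-1$, completing the dimension claim.

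I do not anticipate a genuine obstacle here, as this is the standard computation of the tangent space of a sphere via its level-set description. The only point requiring a little care is the reverse inclusion: the orthogonality relation alone shows containment in a hyperplane but not equality, so the great-circle construction (or an appeal to the regular value theorem, noting $\mathbf{v}_{\theta}\mapsto\|\mathbf{v}_{\theta}\|^2$ has $\mathbf{v}_{\theta}$ as a submersion point for $\mathbf{v}_{\theta}\neq\mathbf{0}$) is what upgrades the inclusion to the full $(d-1)$-dimensional identification needed for the subsequent Jacobian determinant computation.
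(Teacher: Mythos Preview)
Your proposal is correct and takes essentially the same approach as the paper: differentiate the unit-norm constraint $\mathbf{v}_{\theta}^T\mathbf{v}_{\theta}=1$ to obtain the orthogonality relation, then identify the tangent space as the hyperplane $\{\mathbf{v}_{\theta}\}^{\perp}$ of dimension $d-1$. The only difference is that the paper obtains the dimension count by simply citing that $S^{d-1}$ is a $(d-1)$-dimensional manifold, whereas you go a step further and establish the reverse inclusion explicitly via a great-circle construction; your version is slightly more self-contained, but the underlying argument is the same.
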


\begin{proof}
The constraint $\|\mathbf{v}_{\theta}\| = 1$ implies $\mathbf{v}^T_{\theta} \cdot \mathbf{v}_{\theta} = 1$. Differentiating this equation with respect to $\theta$ yields:
$$
\frac{\mathrm{d}}{\mathrm{d}\theta} \bigl(\mathbf{v}^T_{\theta} \cdot \mathbf{v}_{\theta}\bigr) = 2\,\mathbf{v}_{\theta} \cdot \frac{\mathrm{d}\mathbf{v}_{\theta}}{\mathrm{d}\theta} = 0.
$$
Thus, any allowed variation $\frac{\mathrm{d}\mathbf{v}_{\theta}}{\mathrm{d}\theta}$ is orthogonal to $\mathbf{v}_{\theta}$, and therefore lies in the tangent space $T_{\mathbf{v}_{\theta}}S^{d-1}$, which is the subspace of $\mathbb{R}^d$ orthogonal to $\mathbf{v}_{\theta}$. 

Since $S^{d-1}$ is a $(d-1)$-dimensional manifold embedded in $\mathbb{R}^d$, its tangent space $T_{\mathbf{v}_{\theta}}S^{d-1}$ is also $(d-1)$-dimensional.
\end{proof}

\begin{lemma}[Linear Independence of $\{\mathbf{v}_{\theta}, \mathbf{w}_1, \dots, \mathbf{w}_{d-1}\}$]
\label{lemma_linear_independence}
Let $\mathbf{v}_{\theta} \in S^{d-1}$ be a unit vector, and let $\{\mathbf{w}_1, \dots, \mathbf{w}_{d-1}\}$ be an orthonormal basis of $T_{\mathbf{v}_{\theta}}S^{d-1}$, satisfying:
$$
 \langle \mathbf{v}_{\theta}, \mathbf{w}_i \rangle= 0, \quad \langle \mathbf{w}_i, \mathbf{w}_j \rangle=
  \begin{cases}
    1, & \text{if } i = j, \\
    0, & \text{if } i \neq j,
  \end{cases}
  \quad
  i,j = 1,\dots,d-1.
$$
Then the set $\{\mathbf{v}_{\theta}, \mathbf{w}_1, \dots, \mathbf{w}_{d-1}\}$ is linearly independent in $\mathbb{R}^d$. Specifically, any linear combination
$$
  a\,\mathbf{v}_{\theta} + \sum_{i=1}^{d-1} b_i\,\mathbf{w}_i = \mathbf{0}
$$
implies $a = 0$ and $b_i = 0$ for all $i$.
\end{lemma}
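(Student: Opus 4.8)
The plan is to exploit the fact that the collection $\{\mathbf{v}_{\theta}, \mathbf{w}_1, \dots, \mathbf{w}_{d-1}\}$ is a full orthonormal system in $\mathbb{R}^d$: by hypothesis each $\mathbf{w}_i$ is a unit vector orthogonal to $\mathbf{v}_{\theta}$ and to every other $\mathbf{w}_j$, while $\mathbf{v}_{\theta}$ is itself a unit vector. Thus we have $d$ mutually orthogonal nonzero vectors, and I will show directly that the only vanishing linear combination is the trivial one. The standard mechanism is to pair an arbitrary vanishing combination against each basis element in turn and read off the coefficients from the orthonormality relations.

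First I would start from an arbitrary relation $a\,\mathbf{v}_{\theta} + \sum_{i=1}^{d-1} b_i\,\mathbf{w}_i = \mathbf{0}$ and take its Euclidean inner product with $\mathbf{v}_{\theta}$. Using $\langle \mathbf{v}_{\theta}, \mathbf{v}_{\theta}\rangle = 1$ together with $\langle \mathbf{v}_{\theta}, \mathbf{w}_i\rangle = 0$ for all $i$, every term carrying a $b_i$ drops out and the equation collapses to $a = 0$.

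Next, for each fixed $j \in \{1,\dots,d-1\}$ I would take the inner product of the same relation with $\mathbf{w}_j$. The term $a\,\langle \mathbf{v}_{\theta}, \mathbf{w}_j\rangle$ vanishes by orthogonality to the tangent space, and the orthonormality $\langle \mathbf{w}_i, \mathbf{w}_j\rangle = \delta_{ij}$ isolates the single surviving coefficient, giving $b_j = 0$. Since $j$ was arbitrary, all the $b_j$ vanish, so the set is linearly independent; as these are $d$ independent vectors in $\mathbb{R}^d$, they in fact form a basis, which is exactly what the ensuing Jacobian determinant computation requires.

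There is essentially no obstacle here: the argument is just the familiar fact that an orthogonal family of nonzero vectors is linearly independent, specialized to the orthonormal frame assembled from $\mathbf{v}_{\theta}$ and its tangent basis. The only point worth stating with care is that the orthogonality $\langle \mathbf{v}_{\theta}, \mathbf{w}_i\rangle = 0$ — supplied by Lemma~\ref{lemma_tangent_space}, which places each $\mathbf{w}_i$ in $T_{\mathbf{v}_{\theta}}S^{d-1}$ — is precisely what fuses the radial direction $\mathbf{v}_{\theta}$ with the tangential frame into a single orthonormal system of size $d$, so that the radial and angular degrees of freedom span complementary directions.
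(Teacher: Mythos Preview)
Your proposal is correct and follows essentially the same approach as the paper: start from a vanishing linear combination, pair it first with $\mathbf{v}_{\theta}$ to extract $a=0$, then with each $\mathbf{w}_j$ to extract $b_j=0$ via the orthonormality relations. The paper's proof is identical in structure and detail.
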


\begin{proof}
Assume the linear dependence:
$$
  a\,\mathbf{v}_{\theta} + \sum_{i=1}^{d-1} b_i\,\mathbf{w}_i = \mathbf{0}.
$$
Taking the inner product with $\mathbf{v}_{\theta}$, we obtain:
$$
  a\,\langle\mathbf{v}_{\theta}, \mathbf{v}_{\theta}\rangle + \sum_{i=1}^{d-1} b_i\,\langle\mathbf{w}_i, \mathbf{v}_{\theta}\rangle = a\,\|\mathbf{v}_{\theta}\|^2 = a = 0,
$$
since $\|\mathbf{v}_{\theta}\| = 1$ and $\mathbf{v}_{\theta} \cdot \mathbf{w}_i = 0$. Next, taking the inner product with $\mathbf{w}_j$, we have:
$$
  \sum_{i=1}^{d-1} b_i\,\langle\mathbf{w}_i, \mathbf{w}_j\rangle = b_j,
$$
because $\langle\mathbf{w}_i, \mathbf{w}_j\rangle = 0$ for $i \neq j$ and $\langle\mathbf{w}_j , \mathbf{w}_j\rangle = 1$. Hence, $b_j = 0$. 

Since $j$ is arbitrary, we conclude that $b_i = 0$ for all $i$, and thus $a = 0$. This proves that the set $\{\mathbf{v}_{\theta}, \mathbf{w}_1, \dots, \mathbf{w}_{d-1}\}$ is linearly independent.
\end{proof}



Given the mapping $\mathbf{\hat{y}}(\psi, \mathbf{v}_{\theta}) = \tan(\psi)\,\mathbf{v}_{\theta}$, any infinitesimal change of $\mathbf{v}_{\theta}$ on the unit sphere $S^{d-1}$ lies in the tangent space $T_{\mathbf{v}_{\theta}}S^{d-1}$ (see Lemma~\ref{lemma_tangent_space}), ensuring $\mathbf{v}_{\theta}\cdot \mathrm{d}\mathbf{v}_{\theta}=0$. We introduce an orthonormal basis $\{\mathbf{w}_1,\dots,\mathbf{w}_{d-1}\}$ for this $(d-1)$-dimensional space. Consequently, each partial derivative with respect to $\mathbf{v}_{\theta}$ appears in the direction of some $\mathbf{w}_i$, yielding
$$
  \frac{\partial \mathbf{\hat{y}}}{\partial v_{\theta,i}} 
  \;=\; 
  \tan(\psi)\,\mathbf{w}_i,
  \quad
  i=1,\dots,d-1.
$$
Hence, $\mathbf{\hat{y}}$ increases linearly in each $\mathbf{w}_i$-direction, with a scaling factor $\tan(\psi)$.


Collecting \emph{all} these derivatives, The Jacobian matrix of the transformation is given by:
$$
\mathrm{D}\mathbf{\hat{y}}(\Phi, \mathbf{v}_{\theta}) =
\begin{bmatrix}
\sec^2(\psi)\,v_{\theta,1} & \tan(\psi)\,w_{1,1} & \dots & \tan(\psi)\,w_{d-1,1} \\
\sec^2(\psi)\,v_{\theta,2} & \tan(\psi)\,w_{1,2} & \dots & \tan(\psi)\,w_{d-1,2} \\
\vdots & \vdots & \ddots & \vdots \\
\sec^2(\psi)\,v_{\theta,d} & \tan(\psi)\,w_{1,d} & \dots & \tan(\psi)\,w_{d-1,d} \\
\end{bmatrix}.
$$
Here, the first column corresponds to $\frac{\partial \mathbf{x}}{\partial \Phi}$, and the subsequent $(d-1)$ columns represent $\frac{\partial \mathbf{x}}{\partial v_i}$ for each tangent direction $\mathbf{w}_i$.

Since $\{\mathbf{v}_{\theta}, \mathbf{w}_1, \dots, \mathbf{w}_{d-1}\}$ is an orthonormal set (as established in Lemma~\ref{lemma_linear_independence}, with $\langle\mathbf{v}_{\theta},\mathbf{w}_i\rangle=0$, $\|\mathbf{v}_{\theta}\|=1$, and $\|\mathbf{w}_i\|=1$), the determinant of $\mathrm{D}\mathbf{\hat{y}}(\psi,\mathbf{v}_{\theta})$ equals the product of the column norms:
$$
\det\bigl(\mathrm{D}\mathbf{\hat{y}}(\psi,\mathbf{v}_{\theta})\bigr)
\;=\;
\bigl\|\sec^2(\psi)\,\mathbf{v}_{\theta}\bigr\| 
\,\times\, 
\prod_{i=1}^{d-1}
\bigl\|\tan(\psi)\,\mathbf{w}_i\bigr\|.
$$
Noting that $\|\mathbf{v}_{\theta}\|=1$ and $\|\mathbf{w}_i\|=1$, the norms are
$$
\bigl\|\sec^2(\psi)\,\mathbf{v}_{\theta}\bigr\| \;=\; \sec^2(\psi),
\quad
\bigl\|\tan(\psi)\,\mathbf{w}_i\bigr\|
\;=\;
\tan(\psi).
$$
Hence, the determinant simplifies to
$$
\det\bigl(\mathrm{D}\mathbf{\hat{y}}(\Phi,\mathbf{v}_{\theta})\bigr) 
\;=\;
\bigl(\sec^2(\psi)\bigr)
\,\times\,
\bigl(\tan(\psi)\bigr)^{d-1}
\;=\;
\tan(\psi)^{\,d-1}\,\sec^2(\psi).
$$

\subsection{Measure Distortion Near $\psi\to \pi/2$}

\begin{theorem}[Regularization of Jacobian Divergence]
Let $\mathbf{\hat{y}}(\psi, \mathbf{v}_{\theta}) = \tan(\psi)\,\mathbf{v}_{\theta}$ be the homeomorphic mapping defined in HoP, where $\psi \in (0, \tfrac{\pi}{2})$ and $\mathbf{v}_{\theta} \in S^{d-1}$. The Jacobian determinant of $\mathbf{\hat{y}}$ is given by:
$$
\det\bigl(\mathrm{D}\mathbf{\hat{y}}(\psi, \mathbf{v}_{\theta})\bigr) = \tan(\psi)^{\,d-1}\,\sec^2(\psi).
$$
As $\psi \to \tfrac{\pi}{2}$, the Jacobian determinant diverges as:
$$
\det\bigl(\mathrm{D}\mathbf{\hat{y}}(\psi, \mathbf{v}_{\theta})\bigr) \sim \frac{1}{\epsilon^{\,d+1}},
$$
where $\epsilon = \tfrac{\pi}{2} - \psi$. To address this divergence, HoP introduces $\epsilon > 0$ as a regularization parameter, ensuring that:
\begin{enumerate}
\item The mapping $\mathbf{\hat{y}}$ remains well-defined and smooth for all $\psi \in (-\tfrac{\pi}{2}, \tfrac{\pi}{2} - \epsilon]$.
\item The Jacobian determinant is bounded by:
$$
\det\bigl(\mathrm{D}\mathbf{\hat{y}}(\psi, \mathbf{v}_{\theta})\bigr) \leq C \cdot \frac{1}{\epsilon^{\,d+1}},
$$
where $C$ is a constant depending only on $d$.
\end{enumerate}
This regularization prevents unbounded volume distortion and ensures numerical stability in optimization and sampling procedures.
\end{theorem}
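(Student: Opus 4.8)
The plan is to reduce everything to the closed-form determinant $\det\bigl(\mathrm{D}\mathbf{\hat{y}}(\psi,\mathbf{v}_{\theta})\bigr) = \tan(\psi)^{d-1}\sec^2(\psi)$ derived in the preceding subsection, and then dispatch the asymptotic rate, the smoothness claim, and the explicit bound one at a time. For the divergence rate I would first change variables to $\epsilon = \tfrac{\pi}{2}-\psi$ and rewrite the two trigonometric factors via the co-function identities $\tan(\tfrac{\pi}{2}-\epsilon)=\cot\epsilon$ and $\sec(\tfrac{\pi}{2}-\epsilon)=\csc\epsilon$, so that the determinant equals $\cot(\epsilon)^{d-1}\csc^2(\epsilon)$. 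The small-angle expansions $\cot\epsilon = \epsilon^{-1}\bigl(1+O(\epsilon^2)\bigr)$ and $\csc\epsilon = \epsilon^{-1}\bigl(1+O(\epsilon^2)\bigr)$ then give $\cot(\epsilon)^{d-1}\csc^2(\epsilon)\sim \epsilon^{-(d-1)}\cdot\epsilon^{-2}=\epsilon^{-(d+1)}$, which is exactly the stated rate $1/\epsilon^{\,d+1}$.

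For smoothness, I would observe that $\mathbf{\hat{y}}(\psi,\mathbf{v}_{\theta})=\mathbf{y}_0+\tan(\psi)\,\mathbf{v}_{\theta}$ is smooth in its arguments precisely where $\tan$ is smooth, and $\tan\in C^\infty(-\tfrac{\pi}{2},\tfrac{\pi}{2})$; since $\tfrac{\pi}{2}-\epsilon$ lies strictly interior to this interval and the left endpoint $-\tfrac{\pi}{2}$ is excluded, $\mathbf{\hat{y}}$ is $C^\infty$ on the whole domain $\psi\in(-\tfrac{\pi}{2},\tfrac{\pi}{2}-\epsilon]$, composed with the smooth dependence on $\mathbf{v}_{\theta}\in S^{d-1}$. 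For the explicit bound I would exploit monotonicity: on $[0,\tfrac{\pi}{2})$ both $\tan(\psi)\ge 0$ and $\sec^2(\psi)\ge 1$ are non-decreasing, hence so is the product $\tan(\psi)^{d-1}\sec^2(\psi)$ for every $d\ge 1$, and its supremum over the operational range $[0,\tfrac{\pi}{2}-\epsilon]$ is attained at the right endpoint, namely $\cot(\epsilon)^{d-1}\csc^2(\epsilon)$. To convert this into $C/\epsilon^{\,d+1}$ I would invoke Jordan's inequality $\sin\epsilon\ge \tfrac{2}{\pi}\epsilon$ on $(0,\tfrac{\pi}{2}]$ together with $|\cos\epsilon|\le 1$, which yield $\cot\epsilon\le\csc\epsilon\le \tfrac{\pi}{2\epsilon}$; substituting gives $\cot(\epsilon)^{d-1}\csc^2(\epsilon)\le(\tfrac{\pi}{2})^{d+1}\epsilon^{-(d+1)}$, so the constant $C=(\tfrac{\pi}{2})^{d+1}$ depends only on $d$, establishing claim 2.

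The main obstacle I anticipate is not any individual computation but the bookkeeping around the domain. The theorem asserts smoothness on the extended interval $(-\tfrac{\pi}{2},\tfrac{\pi}{2}-\epsilon]$, which includes negative $\psi$, whereas the magnitude $\tan(\psi)^{d-1}\sec^2(\psi)$ also blows up as $\psi\to -\tfrac{\pi}{2}$, so a naive ``$\le C/\epsilon^{\,d+1}$'' over the full extended interval would be false. I would resolve this by emphasizing that the regularization parameter $\epsilon$ caps $\psi$ away from $+\tfrac{\pi}{2}$ only, consistent with the construction $\psi=\bar{z}_r\phi$ with $\bar{z}_r\in(0,1)$ and $\phi\in(0,\tfrac{\pi}{2})$ from Eqs.~\eqref{recnection_theta} and \eqref{recnection_r}, which forces $\psi>0$ in practice; hence the monotonicity argument and the bound are stated over the operational range $[0,\tfrac{\pi}{2}-\epsilon]$, and the behavior near $-\tfrac{\pi}{2}$ is irrelevant. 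With that clarification in place, the remaining steps are elementary trigonometric estimates.
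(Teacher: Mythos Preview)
Your proposal is correct and follows essentially the same approach as the paper: both substitute $\epsilon=\tfrac{\pi}{2}-\psi$, use the leading-order asymptotics $\tan(\psi)\sim\epsilon^{-1}$ and $\sec^2(\psi)\sim\epsilon^{-2}$ to obtain the $\epsilon^{-(d+1)}$ rate, and then note that capping $\psi\le\tfrac{\pi}{2}-\epsilon$ yields boundedness. Your version is considerably more careful than the paper's two-line argument---you route through co-function identities, supply an explicit constant $C=(\pi/2)^{d+1}$ via Jordan's inequality, and flag the domain subtlety near $-\tfrac{\pi}{2}$---but these are refinements of the same idea rather than a different strategy.
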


\begin{proof}
The divergence follows from the asymptotic behavior of $\tan(\psi)$ and $\sec(\psi)$ as $\psi \to \tfrac{\pi}{2}$:
$$
\tan(\psi) \sim \frac{1}{\epsilon} \quad \text{and} \quad \sec^2(\psi) \sim \frac{1}{\epsilon^2}.
$$
Substituting these into the Jacobian determinant yields:
$$
\det\bigl(\mathrm{D}\mathbf{\hat{y}}(\psi, \mathbf{v}_{\theta})\bigr) = \tan(\psi)^{\,d-1}\,\sec^2(\psi) \sim \frac{1}{\epsilon^{\,d-1}} \cdot \frac{1}{\epsilon^2} = \frac{1}{\epsilon^{\,d+1}}.
$$
By restricting $\psi$ to $\psi \leq \tfrac{\pi}{2} - \epsilon$, the Jacobian determinant remains bounded, completing the proof.
\end{proof}

Thus, the Jacobian determinant diverges polynomially in $\tfrac{1}{\epsilon}$, reflecting a severe measure distortion in the limit $\psi\to \tfrac{\pi}{2}$. In practical terms, small parameter increments around $\psi \approx \tfrac{\pi}{2}$ map to disproportionately large volume elements in $\mathcal{Y}_\mathbf{x}$, leading to potential numerical instability if one attempts to sample or optimize directly over $\psi$ without truncation.

Although the Jacobian determinant $\det\bigl(\mathrm{D}\mathbf{\hat{y}}(\psi, \mathbf{v}_{\theta})\bigr)$ diverges as $\psi \to \tfrac{\pi}{2}$, practical considerations ensure numerical stability. Empirical knowledge or problem-specific constraints often imply an upper bound $\psi_{\text{max}} < \tfrac{\pi}{2}$, allowing the selection of $\epsilon = \tfrac{\pi}{2} - \psi_{\text{max}}$. Additionally, finite precision in computing hardware inherently limits the representable range, preventing true divergence in practice.

\section{Challenges of Stagnation in Polar Coordinate Optimization}\label{appendix:pco_analysis}
\begin{figure}[ht]
\begin{center}
\centerline{\includegraphics[width=0.9\textwidth]{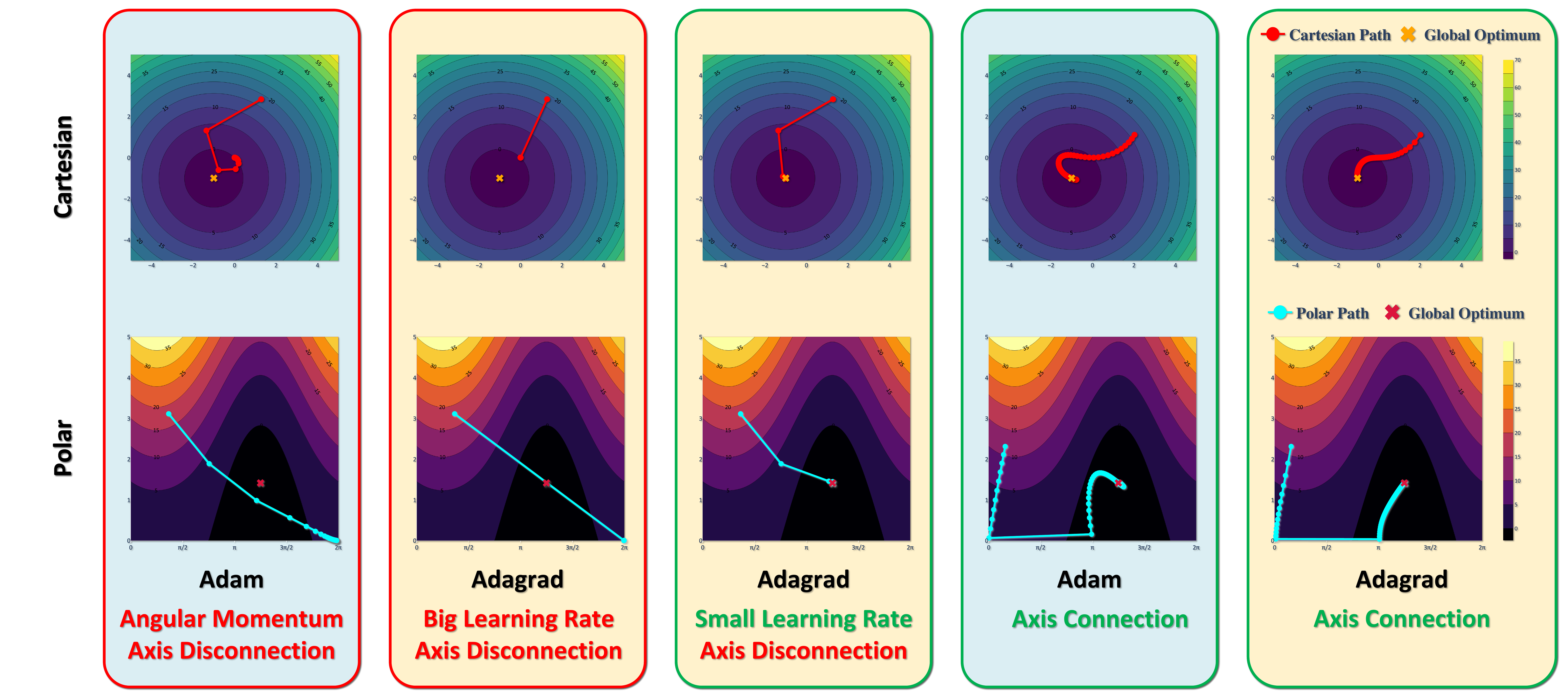}}
\caption{Illustration of optimization trajectories under polar coordinates. The figures demonstrate the impact of learning rate, momentum, and geometric reconnection on the convergence behavior.}
\label{polar-experiment-full}
\end{center}
\end{figure}

Optimization in polar coordinates introduces distinct challenges due to the coupling between radial and angular variables. As illustrated in Fig.~\ref{polar-experiment}, these challenges manifest as radial stagnation, angular freezing, and oscillatory behavior under large learning rates or momentum-based updates. Below, we formally state and address these issues.

\begin{remark} Radial Stagnation and Angular Freezing.\label{prop:radial_stagnation}
In polar coordinates, the non-negativity constraint $ r \geq 0 $ introduces radial stagnation and angular freezing when $ r = 0 $. Specifically, the radial update halts as:
\begin{flalign}
r_{t+1} = \max\{0, r_t - \eta (\cos\theta_t \frac{\partial f}{\partial x} + \sin\theta_t \frac{\partial f}{\partial y})\}.
\end{flalign}
When $ r_{t+1} = 0 $, angular updates are frozen:
\begin{flalign}
\theta_{t+1} = \theta_t + \eta r_{t+1} (\sin\theta_t \frac{\partial f}{\partial x} - \cos\theta_t \frac{\partial f}{\partial y}) = \theta_t.
\end{flalign}
This prevents the optimizer from escaping local regions and exploring global optima, particularly when the solution lies at $ \theta_t + \pi $.
\end{remark}
\begin{remark}Impact of Learning Rate and Momentum.\label{prop:LR_Momentum}
Large learning rates and momentum exacerbate radial stagnation and angular freezing. For a Lipschitz-continuous gradient $\|\nabla f\| \leq B$, radial truncation occurs when:
\begin{flalign}
\eta > \frac{r_t}{B}.
\end{flalign}
Momentum-based optimizers introduce oscillatory behavior near optima, as angular updates retain past gradient contributions, leading to overshooting or divergence.
\end{remark}
\begin{remark}Dynamic Learning Rate Strategy.\label{prop:Dynamic_LR}
To mitigate radial stagnation, dynamically scaling the learning rate as:
\begin{flalign}
\eta_t = \alpha r_t, \quad \text{where } \alpha < \frac{1}{B},
\end{flalign}
ensures $ r_{t+1} > 0 $. This prevents radial truncation by reducing step sizes near $ r = 0 $, while maintaining convergence stability. 
\end{remark}
Fig.~\ref{polar-experiment-full} (right) illustrates the effectiveness of geometric reconnection and adaptive learning strategies. The combined approach avoids radial stagnation, preserves gradient continuity, and facilitates efficient exploration of the optimization landscape.





\section{Experimental Data generation and Parameter Setup}\label{exp_setting_details}
In this section, the hyper-parameters of pervious experiments are provided. We split the section into two subsections to introduce the setting in Experiment \ref{exp:Synthetic} and Experiment \ref{miso_prob}, respectively. 

All experiments were conducted on a system with the following specifications:  

\begin{itemize}
    \item Host: Lambda Vector 1
    \item OS: Ubuntu 22.04.5 LTS x86\_64  
    \item CPU: AMD Ryzen Threadripper PRO 5955WX (32 cores) @ 4.000GHz  
    \item GPU: Single NVIDIA RTX A6000 
\end{itemize}

To simulate full parallelization, we report the total computation time divided by the number of test instances. Note that our implementations are not tightly optimized, and all timing comparisons should be interpreted as approximate.

\subsection{Synthetic Benchmarks \ref{exp:Synthetic}}

\subsubsection{(a) Polygon-Constrained Problem}


The objective function for this experiment is defined as in Eq. (\ref{QP_sin_Poly}). The problem is set in a 2-dimensional space, with the matrix $\mathbf{Q}$ randomly generated as a positive semi-definite matrix to ensure convex quadratic terms. Specifically, $\mathbf{Q}$ is created using the following procedure:
\begin{flalign}\label{semi-definite}
  \mathbf{Q} = \mathbf{A}^T \mathbf{A} + \mathbf{I},
\end{flalign}
where $\mathbf{A}$ is a randomly generated $2 \times 2$ matrix drawn from a standard Gaussian distribution, and $\mathbf{I}$ is the identity matrix to guarantee positive semi-definiteness.

The vector $\mathbf{b}$ is generated by sampling from a uniform distribution $\mathcal{U}(0, 2)$ and is checked against the polygon constraints using a validation function. Specifically, $\mathbf{b}$ is accepted only if it satisfies the closure conditions of the polygon constraints. Samples violating these constraints are discarded, ensuring all 20,000 instances in the dataset strictly satisfy the feasibility conditions.

The dataset is divided into a training set (70\%) and a test set (30\%). The parameters in the objective function are set as follows: the vector $\mathbf{p}$ is fixed to $[30, 30]$, and the sinusoidal term's parameters $\mathbf{q}$ are randomly generated for each instance, also drawn from $\mathcal{U}(0, 2)$. The constraint matrix $\mathbf{A}$ is precomputed and fixed for all experiments.

The center point $\mathbf{y}_0$ of the polygon is computed as the Chebyshev center of the feasible region. The Chebyshev center is obtained by solving a secondary optimization problem that maximizes the radius of the largest inscribed circle within the polygon constraints, ensuring that $\mathbf{y}_0$ lies strictly inside the feasible region.

\subsubsection{(b) $\ell_p$-norm Problem}


The objective function for this experiment is defined in Eq. (\ref{eq:lp-norm}). The matrix $\mathbf{Q}$ is randomly generated as a positive semi-definite matrix, ensuring convex quadratic terms, while the vector $\mathbf{p}$ is sampled from a standard normal distribution $\mathcal{N}(0, 1)$. The $\ell_p$-norm constraint uses $p = 0.5$, and the parameter $b$ is fixed to 1. The dataset consists of 20,000 samples, split into a training set ($70\%$) and a test set ($30\%$). The center point $\mathbf{y}_0$ is set to $[0, 0]$.

The positive semi-definite matrix $\mathbf{Q}$ is generated as described in Eq. (\ref{semi-definite}).

The dataset is constructed by generating feasible samples that satisfy the $\ell_p$-norm constraint. Instances violating the constraint are discarded, ensuring that the entire dataset strictly adheres to the defined feasibility conditions.

\subsubsection{High-Dimensional Semi-Unbounded Problem}

The objective function for high-dimensional semi-unbounded problem is given in Eq. (\ref{QP_sin_Poly}), where $\beta=30$, and $\mathbf{p}\sim\mathcal{N}$ is randomly sampled from the Gaussian Distribution $(-10,\mathbf{I})$. The parameters in the constraints are fixed, where $\mathbf{A}$ is randomly drawn from $\mathcal{N}(0,\mathbf{I})$ and $\mathbf{b}$ is drawn randomly as a positive number.  Since in this experiment constraints are fixed for a specific dataset, without lose of generality we define $\mathbf{b}=\mathbf{1}$. Moreover, the number of linear constraints is $d$, which is the dimension of $\mathbf{\hat{y}}$. Thus arbitrary semi-unbounded constraints are obtained while we have $\mathbf{0}$ as $\mathbf{y}_0$. The dataset is divided into a training set ($70\%$) and a test set ($30\%$). For training part, the learn rate is $10^{-4}$, the optimizer is Adam, and the training configuration includes $500$ epochs with a batch size of $2,048$. For $20$-dimensional problem, the dataset contains $20,000$ instance; 40, 60, 80 dimensions problems have $40,000$, $60,000$, $80,000$ instances, respectively. The MLP used for both all baseline methods and HoP has 512 neurons with ReLU activation functions.

\subsection{QoS-MISO WSR Problem}\label{QOSMISO_polar}
In this subsection the data preparation and how we apply HoP on QoS-MISO WSR problem are demonstrated. To generate the MISO simulation data, we apply algorithm given in Appendix \ref{FP_MISO_data_pre}. The user priority weight $\alpha_k\sim\mathcal{U}(0,1)$ with uniformization by $\alpha_k/(\sum_{k=1}^U \alpha_k)$. Channel state information is $\mathbf{h}_k$, followed circularly symmetric complex Gaussian (CSCG) distribution where the real part and imagine part follows $\mathcal{CN}(0,1)$. pathloss $=10$ and $\sigma^2=0.01$, $\delta_k\sim\mathcal{U}(0,1/3)$. $\text{P}_\text{max}=33$ dbm, $\text{P}_\text{c}=30$ dbm. For the training part, the learn rate is $10^{-2}$, the optimizer is Adam, and the training configuration includes $500$ epochs with a batch size of $64$. The dataset has $4000$ instances which is divided into a training set ($70\%$) and a test set ($30\%$).
Then, in the following part, we introduce how to formulate this multi-variables problem as a single variable problem which satisfies the NN's outputs format. The problem given in Eq. (\ref{WSR_pro}) can be reformulated as:
\begin{subequations}\label{miso_app}
\begin{flalign}
    \max_{\mathbf{w}_k} &\sum_{k=1}^U \alpha_k \log_{2}(1+\text{SINR}_k)\tag{\ref{miso_app}}\\
    \text{s.t.}\
    &\frac{\mathbf{w}_k^H\mathbf{h}_k\mathbf{h}_k^H\mathbf{w}_k}{\sum_{j\neq k}^U \mathbf{w}_j^H\mathbf{h}_k\mathbf{h}_k^H\mathbf{w}_j+\sigma_k^2}\geq \omega_k\label{sinr_cons}\\
    &\sum_{k=1}^U \text{P}_k \leq \text{P}_\text{max}-\text{P}_\text{c}
\end{flalign}
\end{subequations}
where $ \omega_k = 2^{\delta_k}-1$. Then, the constraint in Eq. (\ref{sinr_cons}) is identical to:
\begin{flalign}
    {\mathbf{w}_k^H\mathbf{h}_k\mathbf{h}_k^H\mathbf{w}_k}-\omega_k{\sum_{j\neq k}^U \mathbf{w}_j^H\mathbf{h}_k\mathbf{h}_k^H\mathbf{w}_j}\geq \omega_k\sigma_k^2\label{sinr_expand}
\end{flalign}
In this problem, for the convenience, $\mathbf{w}_k$ and $\mathbf{h}_k$ are reformulated as real vector and matrix by splicing:
\begin{flalign}\label{real_imag_reformH}
    &\Tilde{\mathbf{H}}_k = 
    \begin{bmatrix} 
    \Re{(\mathbf{h}_k\mathbf{h}_k^H)}&-\Im{(\mathbf{h}_k\mathbf{h}_k^H)}\\
    \Im{(\mathbf{h}_k\mathbf{h}_k^H)}&\Re{(\mathbf{h}_k\mathbf{h}_k^H)}
    \end{bmatrix} \\
    &\Tilde{\mathbf{w}}_k=\begin{bmatrix} 
    \Re{(\mathbf{w}_k)}\\\Im{(\mathbf{w}_k)}\end{bmatrix}\label{real_imag_reformw}
\end{flalign}
Therefore, Eq. (\ref{sinr_expand}) is represented by Eq. (\ref{real_imag_reformH}) and Eq. (\ref{real_imag_reformw}):
\begin{flalign}
    \Tilde{\mathbf{w}}_k^T\Tilde{\mathbf{H}}_k\Tilde{\mathbf{w}}_k - \omega_k\sum_{j\neq k}^U \Tilde{\mathbf{w}}_j^T\Tilde{\mathbf{H}}_k\Tilde{\mathbf{w}}_j\geq \omega_k\sigma_k^2\label{real_sinr}
\end{flalign}
Splicing all $\mathbf{\Tilde{w}}_k$ for each user, then Eq. (\ref{real_sinr}) is rewritten as:
\begin{flalign}
    \bar{\mathbf{w}}^T\bar{\mathbf{H}}_k\bar{\mathbf{w}} \geq \omega_k\sigma_k^2
\end{flalign}
where $\bar{\mathbf{w}}$, $\bar{\mathbf{H}}_k$ and $f_j(\omega_k)$ are defined as:
\begin{flalign}
    &\bar{\mathbf{w}}= [\Tilde{\mathbf{w}}_1^T,\Tilde{\mathbf{w}}_2^T,...,\Tilde{\mathbf{w}}_U^T]^T\\
    &\bar{\mathbf{H}}_k = \begin{bmatrix}
        f_1(\omega_k)\Tilde{\mathbf{H}}_k&\mathbf{0}&...&\mathbf{0}\\
        \mathbf{0}&f_2(\omega_k)\Tilde{\mathbf{H}}_k&...&\mathbf{0}\\
        \vdots&\vdots&\ddots&\vdots
        \\
        \mathbf{0}&\mathbf{0}&...&f_U(\omega_k)\Tilde{\mathbf{H}}_k
    \end{bmatrix}\\
    &f_j(\omega_k)
\begin{cases}
1, & {j = k}, \\
-\omega_k, & {j \neq k}.
\end{cases}
\end{flalign}
Based on above transformation, we apply the same operation on other constraints, then we have
\begin{flalign}
    \bar{\mathbf{w}}^T\bar{\mathbf{H}}_1\bar{\mathbf{w}} &\geq \omega_1\sigma_1^2\notag\\
    \bar{\mathbf{w}}^T\bar{\mathbf{H}}_2\bar{\mathbf{w}} &\geq \omega_2\sigma_2^2\notag\\
    &\vdots\notag\\
    \bar{\mathbf{w}}^T\bar{\mathbf{H}}_U\bar{\mathbf{w}} &\geq \omega_k\sigma_U^2\label{block_qos}\\
    \bar{\mathbf{w}}^T\mathbf{I}\bar{\mathbf{w}}&\leq \text{P}_\text{max}-\text{P}_\text{c}
\end{flalign}
Here, the original problem can be reorganized as an equivalent problem as follows:
\begin{subequations}\label{final_miso}
\begin{flalign}
    \max_{\mathbf{\bar{w}}_k} \sum_{k=1}^U \alpha_k \log_{2}&(1+\text{SINR}_k)\tag{\ref{final_miso}}\\
    \text{s.t.}\
    \bar{\mathbf{w}}^T\bar{\mathbf{H}}_1\bar{\mathbf{w}} &\geq \omega_1\sigma_1^2\notag\\
    \bar{\mathbf{w}}^T\bar{\mathbf{H}}_2\bar{\mathbf{w}} &\geq \omega_2\sigma_2^2\notag\\
    &\vdots\notag\\
    \bar{\mathbf{w}}^T\bar{\mathbf{H}}_U\bar{\mathbf{w}} &\geq \omega_U\sigma_U^2\\
    \bar{\mathbf{w}}^T\mathbf{I}\bar{\mathbf{w}}&\leq \text{P}_\text{max}-\text{P}_\text{c}
\end{flalign}
\end{subequations}
As consequence, $\bar{\mathbf{w}}$ is the estimated variable $\mathbf{\hat{y}}$ by HoP, where the NN's inputs, $\mathbf{x}$ are flatten $\Tilde{\mathbf{H}}_k$.
\section{FP for QoS-MISO WSR Experiment}\label{FP_MISO_data_pre}
We use the FP method to solve the original QoS-MISO WSR problem to compute the reference optimal beamformers results. According to \cite{shen2018fractional}, we reformulate the original problem as
\begin{subequations}\label{fp_prob}
\begin{flalign}
    \max_{\mathbf{W}_k} &\sum_{k=1}^U \alpha_k \log_{2}\big(1+2s_k\sqrt{\mathbf{h}_k^H\mathbf{W}_k\mathbf{h}_k}- s_k^2(\sigma_k^2+\sum_{j\neq k}^U(\mathbf{h}_k^H\mathbf{W}_j\mathbf{h}_k)\big) \tag{\ref{fp_prob}}\\
    \text{s.t.}\
    &\log_{2}(1+\frac{\mathbf{h}_k^H\mathbf{W}_k\mathbf{h}_k}{\sigma_k^2+\sum_{j\neq k}^U(\mathbf{h}_k^H\mathbf{W}_j\mathbf{h}_k)})\geq \delta_k\\
    &\sum_{k=1}^U \text{Tr}(\mathbf{W}_k) \leq \text{P}_\text{max}-\text{P}_\text{c}
\end{flalign}
\end{subequations}
where $s_k$ is auxiliary variable can be obtained by:
\begin{flalign}
    s_k = \frac{\sqrt{\mathbf{h}_k^H\mathbf{W}_k\mathbf{h}_k}}{\sigma_k^2+\sum_{j\neq k}^U\mathbf{h}_k^H\mathbf{W}_j\mathbf{h}_k}\label{z_k_update}
\end{flalign}
Note that the variables are applied with semidefinite relaxation such that $\mathbf{W}_k =\mathbf{w}_k\mathbf{w}_k^H $, $\text{rank}(\mathbf{W}_k) = 1$. Then, $\mathbf{w}_k$ is obtained by applying singular value decomposition (SVD) on $\mathbf{W}_k$. Therefore, the FP algorithm can solve QoS-MISO WSR problem by Algorithm \ref{alg:fp_qos_MISOWSR_data}.

\begin{algorithm}[tb]
   \caption{FP for QoS-MISO WSR optimization}
   \label{alg:fp_qos_MISOWSR_data}
\begin{algorithmic}
   \STATE {\bfseries Input:} Initialization parameters. Set counter $j=1 $ and convergence precision $\varphi_p$,
   \REPEAT
   \STATE Update $s_k$ by (\ref{z_k_update});
   \STATE Solve (\ref{fp_prob}) by cvxpy;
   \STATE Solve $\mathbf{w}_k$ by SVD;
   \UNTIL{$|\text{WSR}_{j+1}-\text{WSR}_{j}| \leq \varphi_p$}
\end{algorithmic}
\end{algorithm}


\end{document}